\let\ORIlabel\label
\let\ORIrefstepcounter\refstepcounter
	\let\label\ORIlabel 
	\let\refstepcounter\ORIrefstepcounter}
\newcommand\numberthis{\addtocounter{equation}{1}\tag{\theequation}}
	\def\refstepcounter@optarg[#1]#2{%
		\cref@old@refstepcounter{#2}%
		\cref@constructprefix{#2}{\cref@result}%
		\@ifundefined{cref@#1@alias}%
		{\def\@tempa{#1}}%
		{\def\@tempa{\csname cref@#1@alias\endcsname}}%
		\protected@edef\cref@currentlabel{%
			[\@tempa][\arabic{#2}][\cref@result]%
			\csname p@#2\endcsname\csname the#2\endcsname}}}      
\newcounter{hints}
\renewcommand{\thehints}{\roman{hints}}
\newcommand{\hintedrel}[2][]{%
	\stepcounter{hints}%
	\if\relax\detokenize{#1}\relax\else\csxdef{hint@#1}{\thehints}\fi
	\mathrel{\overset{\textrm{(\thehints)}}{\vphantom{\le}{#2}}}%
}
\newcommand{\restarthintedrel}{\setcounter{hints}{0}}
\definecolor{my-back-col-red}{rgb}{1 0.95 0.95}
\definecolor{my-tit-col-red}{rgb}{0.5 0 0}
\definecolor{my-back-col-blue}{rgb}{0.90 0.95 1}
\definecolor{my-tit-col-blue}{rgb}{0 0.26 0.5}
\definecolor{my-back-col-yellow}{rgb}{1 1 0.8}
\newcommand{\bR}{\mathbb{R}}
\newcommand{\bC}{\mathbb{C}}
\newcommand{\dotprod}[2]{\langle #1, #2 \rangle}
\DeclareMathOperator{\supp}{supp}
\DeclareMathOperator{\card}{card}
\DeclareMathOperator{\sort}{sort}
\DeclareMathOperator{\argsort}{argsort}
\DeclareMathOperator{\softmax}{softmax}
\DeclareMathOperator{\softsort}{softsort_\tau}
\newcommand{\argmax}[2]{\mathop{\mathrm{argmax}}_{#1}#2}
\newcommand{\argmin}[2]{\mathop{\mathrm{argmin}}_{#1}#2}
\newtheorem{remark}[theorem]{Remark}
\newtheorem{example}[theorem]{Example}
\crefname{property}{property}{properties}
\Crefname{property}{Property}{Properties}
\title{Deep greedy unfolding: Sorting out argsorting in greedy sparse recovery algorithms\footnote{This work is an extension of \cite{mohammad2024omp}.}}
\author{Sina Mohammad-Taheri\footnote{Department of Mathematics and Statistics, Concordia University, Montr\'eal, QC, Canada.},$\ \ $Matthew J. Colbrook$^{}$\footnote{Department of Applied Mathematics and Theoretical Physics, University of Cambridge, Cambridge, Cambridgeshire, UK},$\ \ $and$\ \ $Simone Brugiapaglia$^*$}
\begin{document}
	\maketitle
	
	\begin{abstract}
		Gradient-based learning imposes (deep) neural networks to be differentiable at all steps. This includes model-based architectures constructed by unrolling iterations of an iterative algorithm onto layers of a neural network, known as \emph{algorithm unrolling}. However, greedy sparse recovery algorithms depend on the non-differentiable argsort operator, which hinders their integration into neural networks. 
		In this paper, we address this challenge in Orthogonal Matching Pursuit (OMP) and Iterative Hard Thresholding (IHT), two popular representative algorithms in this class. We propose permutation-based variants of these algorithms and approximate permutation matrices using ``soft'' permutation matrices derived from \emph{softsort}, a continuous relaxation of argsort. We demonstrate--both theoretically and numerically--that Soft-OMP and Soft-IHT, as differentiable counterparts of OMP and IHT and fully compatible with neural network training, effectively approximate these algorithms with a controllable degree of accuracy.
		This leads to the development of OMP- and IHT-Net, fully trainable network architectures based on Soft-OMP and Soft-IHT, respectively. Finally, by choosing weights as ``structure-aware'' trainable parameters, we connect our approach to structured sparse recovery and demonstrate its ability to extract latent sparsity patterns from data.
		
	\end{abstract}
	
	\paragraph{Keywords:} Greedy algorithms, orthogonal matching pursuit, iterative hard thresholding, algorithm unrolling, neural networks, softsort.
	
	\section{Introduction}
	The last decade has witnessed great technological developments, ranging from language models to industrial robots, thanks to Artificial Intelligence (AI). These advancements mostly rely on \emph{deep learning}---a flexible data-driven approach that learns from observables to make predictions on unseen data \cite{goodfellow2016deep}. However, while deep learning continues to push the boundaries of machine intelligence, concerns remain about its safety and trustworthiness in critical sectors such as autonomous vehicles and medical imaging \cite{choi20217,heaven2019deep}. These concerns stem primarily from a general lack of interpretability and stability in such models.
	
	On the other hand, \emph{sparse recovery} algorithms, which are used to obtain a parsimonious representation of the data and backed by proven breakthroughs in modern compression and data acquisition \cite{adcock2021compressive, foucart2013mathematical}, benefit from concrete mathematical justifications, including recovery guarantees and convergence analysis.
	
	Deep learning and sparse recovery algorithms are unified under the umbrella of \emph{algorithm unrolling} (also known as \emph{unfolding} or \emph{unraveling}) \cite{liang2020deep, monga2021algorithm, scarlett2022theoretical}. This paradigm involves unfolding a (sparse recovery) algorithm into the layers of a neural network, where certain parameters of the original algorithm are treated as trainable, thus optimizing the performance of the algorithm with respect to those parameters.
	
	However, training neural networks, which relies on gradient-based optimization, necessitates differentiability at all steps. This requirement clashes with the fact that a broad class of sparse recovery methods, known as \emph{greedy and thresholding algorithms} \cite{foucart2013mathematical}, employs the non-differentiable \emph{argsort} operator---a piece-wise constant operator (see \Cref{fig:softsort}). This operator poses a challenge for unrolled neural networks, rendering them non-trainable due to its ineffective zero or non-existent gradients. In this manuscript we bridge this gap by using \emph{softsort} \cite{prillo2020softsort}, a differentiable relaxation of the argsort operator, and integrating it into the iterations of two representative algorithms from this family: \emph{Orthogonal Matching Pursuit} (OMP) \cite{davis1994adaptive,pati1993orthogonal} and \emph{Iterative Hard Thresholding} (IHT) \cite{blumensath2008iterative,fornasier2008iterative}. The resulting differentiable variants, that we call \emph{Soft-OMP} and \emph{Soft-IHT}, are shown theoretically and experimentally to be valid approximations of their non-differentiable counterparts.
	
	\subsection{Main contributions}
	Our main contributions are summarized as follows:
	\begin{enumerate}
		\item We propose gradient-friendly versions of OMP and IHT by reinterpreting these algorithms through a permutation-based lens, approximating the permuation matrices associated with argsort using differentiable approximants that we derive from the softsort operator. To the best of our knowledge, this is the first time the non-differentiability of argsort-based operators in these algorithms has been directly addressed through a mathematical treatment rather than engineering-based deep learning tweaks. 
		\item We rigorously analyze and experimentally demonstrate that Soft-OMP and Soft-IHT effectively approximate their original counterparts under suitable conditions on the softsort temperature parameter.
		\item We unroll iterations of these algorithms into neural networks and propose the \emph{OMP-Net} and \emph{IHT-Net} architectures. We show that these networks are trainable with meaningful parameters, i.e., weights that capture latent structure within the data.
		\item We demonstrate that with weights as trainable parameters, OMP- and IHT-Net enable near noise-level recovery in heavily undersampled regimes, provided that the data exhibits sufficient structure. In particular, they are able to outperform, respectively, OMP and IHT in this scenario.
	\end{enumerate}
	
	\subsection{A glimpse at the literature}
	The advantages of algorithm unrolling have made it a popular model-based deep learning approach. 
	Several review papers (see, e.g., \cite{liang2020deep, monga2021algorithm, scarlett2022theoretical}) document this success story. Moreover, we highlight two particularly influential works, \cite{gregor2010learning} and \cite{colbrook2022difficulty}, that hold notable relevance to our study.
	
	The work \cite{colbrook2022difficulty} unrolls iterations of a primal-dual algorithm \cite{chambolle2011first} with recovery guarantees, thereby ensuring stability in neural networks while addressing fundamental limits of AI. On the other hand, \cite{gregor2010learning}, is one the earliest (if not the first) papers to explore algorithm unrolling, introducing the unrolling of ISTA (Iterative Shrinkage-Thresholding Algorithm) \cite{daubechies2004iterative}. ISTA follows a recursive relation similar to IHT but employs soft-thresholding, defined as
	\[S_\lambda(x)_j = \begin{cases}
		x_j - \lambda, & x_j \geq \lambda \\
		0 & -\lambda, < x < \lambda \\
		x_j + \lambda, & x_j \leq -\lambda
	\end{cases},\]
	which is a continuous operator and thus compatible with neural networks (see \Cref{remark:differentiability}). The introduction of Learned ISTA (LISTA) inspired extensive research in the field, including theoretical analyses \cite{behboodi2022compressive,zhang2018ista}. However, OMP and IHT have received comparatively less attention, due to the inherent non-differentiability of their argsort-based operators. Unrolling IHT appeared in two nearly simultaneous works \cite{wang2016learning, xin2016maximal}, considering IHT with both $k$-sparse and the $\ell^0$-regularized hard-thresholding operators (teminology due to \cite{blumensath2008iterative}). Unlike soft-thresholding, the $\ell^0$-regularized component-wise operator defined as
	\[H_\lambda(x)_j = \begin{cases}
		x_j, & x_j \geq \lambda\ \text{or}\ x_j \leq -\lambda\\
		0, & -\lambda < x < \lambda
	\end{cases},\]
	is discontinuous (and thus non-differentiable), but its implementation in neural networks is straightforward and was done in \cite{xin2016maximal,wang2016learning} by using a continuous relaxation of the operator. However, these works do not directly address the non-differentiability of the $k$-sparse IHT, which includes argsort and is central to our work. The same holds for OMP in DeepPursuit, a deep reinforcement learning implementation of OMP \cite{chen2021deeppursuit}, and in the learned greedy method \cite{khatib2021learned}. The present paper builds upon the authors’ previous work \cite{mohammad2024omp}, introducing Soft-IHT and IHT-Net as a complementary contribution, while extending the theoretical and numerical scope of our earlier proposal on Soft-OMP and OMP-Net. 
	
	\subsection{Notation}
	\label{sec:notation}
	Here, we concisely introduce the necessary notations used throughout the paper, while defining additional ones in context as needed. We use $[N]$ to denote the enumeration set $\{1, \dots, N\}$ and $[\ ]$ to denote an empty matrix. $\|\cdot\|$ denotes the $\ell^2$-norm of a vector. For a matrix $A \in \bC^{m \times N}$, we define its range as $\mathcal{R}(A) = \{y \in \bC^m: y = Ax, \ x \in \bC^N\}$, its transpose as $A^\top \in \bC^{N \times m}$, its conjugate transpose as $A^* \in \bC^{N \times m}$, and its $\ell^2$-operator norm as $\|A\| = \|A\|_{2 \to 2} = \sup_{\|x\| = 1}\|Ax\|$. We also adopt MATLAB-style notation, where $A(i, :)$ and $A(:, j)$ refer to the $i$th row and $j$th column of $A$, respectively. To distinguish between row and column representations, we write $A = [a_1, a_2, \dots]$ for a row-wise and $A = [a_1; a_2; \dots]$ for a column-wise arrangement. 
	
	The $s$th \emph{Restricted Isometry Constant} (RIC) of $A$, denoted by $\delta_s(A)$, is the smallest $\delta > 0$ satisfying $ (1 - \delta)\|x\| \leq \|Ax\| \leq (1 + \delta)\|x\| $, for all $s$-sparse vectors $x \in \bC^N$, i.e., those with $\|x\|_0 = \card(\supp(x)) \leq s$, where $\supp(x)$ denotes the support of the vector $x$ and $\card(\cdot)$ returns the cardinality of a set. For a set of indices $S$ with $\card(S) = s$, we denote by $x_S \in \bC^s$ the restriction of $x$ to the indices in $S$, and by $A_S \in \bC^{m \times s}$ the submatrix of $A \in \bC^{m \times N}$ consisting of the columns indexed by $S$. When clear from context, we may use the same notation to refer to zero-padded version of $x$, where entries outside $S$ (i.e., $[N]\backslash S$) are set to zero, preserving the original dimension. Additionally, the \emph{coherence} parameter of $A$ is given by $ \mu(A) = \max_{i, j \in [N],\ i \neq j}|\langle a_i, a_j\rangle| $ where $a_j = A(:, j)$ for $j \in [N]$ (see, e.g., \cite{foucart2013mathematical}).
	
	Throughout the paper, we use the tilde symbol `$ \sim $' to denote variables in the ``Soft" counterparts of OMP and IHT. For example, we represent the equivalent of $x$ in OMP or IHT by $\tilde{x}$ in Soft-OMP or Soft-IHT.
	
	\subsection{Outline of the paper}
	The rest of the paper is organized as follows. The next section provides preliminaries on greedy sparse recovery algorithms, namely OMP and IHT, and formalizes the argsorting issue in their unrolling. Then, \Cref{sec:unrolled_algorithms} introduces softsort and explains how this operator addresses the argsort challenge in OMP and IHT. In this section we also present Soft-OMP and Soft-IHT, along with theorems validating their approximation capabilities. To improve readability, proofs are deferred to \Cref{appendix:continuity}, \Cref{appendix:omp}, and \Cref{appendix:iht}. \Cref{sec:greedy_networks} sets forth the neural network realization of Soft-$\ast$ algorithms. Numerical experiments validating our theoretical results are presented in \Cref{sec:unrolled_algorithms}, while their implications for the trainability of OMP- and IHT-Net are examined in \Cref{sec:numerics}. Finally, we conclude the paper with remarks on future research directions in \Cref{sec:conclusion}.
	
	\section{Preliminaries}
	\label{sec:preliminaries}
	In this section, we first contextualize the sparse recovery problem within our framework and provide an overview of the two greedy solvers that form the focus of this manuscript. Next, we elaborate on the argsort operator, which plays a critical role in hindering the neural network realization of these algorithms. This discussion sets the stage for \Cref{sec:unrolled_algorithms}, where we present one of our key contributions: differentiable alternatives to these algorithms, compatible with neural network learning.
	\subsection{Sparse recovery}
	The goal of sparse recovery is to reconstruct an approximately sparse signal $ x \in \bC^N $, observed through a linear transformation represented by a matrix $ A \in \bC^{m \times N} $, and potentially corrupted by some additive source of error $ e \in \bC^m $. This involves recovering $ x $ given observations $ y \in \bC^m $ assuming a linear measurement model
	\begin{equation}\label{eq:SR_model}
		y = Ax + e.
	\end{equation}
	Depending on the context, $ A $ is referred to as the design, mixing, sensing or dictionary matrix, and $ e $ models numerical or noise error. In compressive sensing or sparse representation onto overcomplete dictionaries, the system in \Cref{eq:SR_model} is underdetermined, i.e., $m < N$ or $m \ll N$. One can formulate a first recovery method in terms of the following \textit{non-convex} optimization problem
	\begin{equation}\label{eq:SR_problem}
		\hat{x} \in \argmin{z \in \bC^N}{\left\{\|y - Az\|_2^2\quad \text{subject\ to}\quad \|z\|_0 \leq k \right\}},
	\end{equation}
	where $ k $ represents the desired sparsity level. Since the sparse recovery problem above is computationally intractable (see, e.g., \cite{foucart2013mathematical}), one might pursue one of the following paths:
	\begin{enumerate}
		\item Relax the non-convex $ \ell^0 $-(pseudo) norm in favor of convex $ \ell^1 $-norm. This approach falls within the realm of convex optimization, leading to basis pursuit in its constrained form or LASSO-type programs in its unconstrained form.
		\item Impose the desired sparsity implicitly or locally through iterations of an iterative algorithm. This approach gives rise to iterative greedy and thresholding algorithms.
	\end{enumerate}
	In this paper, we focus on the latter as notable alternatives for convex optimization programs, with particular emphasis on OMP \cite{davis1994adaptive,pati1993orthogonal} and ($ k $-sparse) IHT \cite{blumensath2008iterative}. For reasons that will be discussed later, we classify IHT as a greedy algorithm, and consequently, study both OMP and IHT under the umbrella of greedy sparse recovery algorithms.
	
	Before discussing these algorithms, we briefly note that we are also concerned with the setting where prior information about the signal is available. Such information can be incorporated into a vector of ``weights" $ w \in \bR^N $, with $w \geq 0$, allowing emphasis to be placed on specific coefficients of the signal, thereby promoting certain structures within the signal (see, e.g., \cite{adcock2022sparse,bah2016sample,friedlander2011recovering,rauhut2016interpolation} and references therein). With this in mind, in the following sections we will also discuss briefly how $ w $ can be effectively incorporated into OMP and IHT, extending them to their weighted versions, WOMP and WIHT respectively. These extensions enable the recovery process to use prior knowledge about the signal structure. Similar weighted extensions have been studied in \cite{jo2013iterative, khajehnejad2011analyzing, mohammad2025greedy}.
	
	\subsection{Greedy algorithms}\label{subsec:greedy_algorithms}
	The core idea of a greedy strategy in solving computationally intractable problems (typically NP-hard) is to repeatedly select the local optimum when the global solution is unattainable within a feasible timeframe, which simply translates to``visit the nearest city!" in the well-known traveling salesman problem \cite{arora2009computational}. Similarly, since problem
	\eqref{eq:SR_problem} is NP-hard (see, e.g., \cite{foucart2013mathematical}), a greedy sparse recovery solver suggests to iteratively sparsify the solution at each stage. OMP incrementally builds a solution with the required sparsity level $k$, whereas IHT consistently enforces the same sparsity level throughout its iterations.
	\subsubsection{OMP} OMP is one of the most well-studied greedy algorithms for sparse recovery. It tackles  problem \eqref{eq:SR_problem} in a greedy fashion by decomposing the global optimization problem into a sequence of local least-squares optimization problems.
	
	At each iteration, in its simplest form, OMP identifies one additional index to add to the signal's support based on a ``greedy selection criterion". The algorithm then reconstructs the signal by iteratively updating the support and computing the corresponding signal coefficients. In this way, OMP ensures a one-to-one correspondence between the number of iterations and sparsity level of the reconstructed signal, effectively imposing the desired sparsity over the course of iterations. This property makes OMP particularly efficient in lower sparsity regimes, i.e., when $ s $ is small. The steps of OMP are outlined in \Cref{alg:OMP}.
	
	\begin{algorithm}[htb]
		\textbf{Input:} desired sparsity $k$, mixing matrix $ A \in \mathbb{C}^{m \times N} $ with $ \ell^2 $-normalized columns, observation vector $ y \in \mathbb{C}^m $. \\
		\vspace{-4mm}
		\begin{tcolorbox}[boxsep=3pt,
			left=1pt,
			right=1pt, colback=white, colframe=black, boxrule=0.5mm, opacityframe=0.3]
			Let $ x^{(0)} = 0 $ and $ S^{(0)} = \emptyset $.\\
			For $ n = 0, \dots, k - 1 $ repeat: 
			\begin{align}
				S^{(n + 1)} &= S^{(n)} \cup \{j^{(n + 1)}\},\quad j^{(n + 1)} \in \argmax{j \in [N]}{\left\{\left|\left(A^*(y - Ax^{(n)})\right)_j\right|\right\}}, \tag{OMP.1} \\
				x^{(n + 1)} &\in \argmin{z \in \mathbb{C}^N}{\left\{\|y - Az\|_2^2\quad \text{s.t.}\quad \text{supp}(z) \subseteq S^{(n + 1)}\right\}}. \tag{OMP.2}
			\end{align}
		\end{tcolorbox}
		\textbf{Output:} $ k $-sparse vector $ \hat{x} = x^{(k)} \in \mathbb{C}^N $.
		\caption{Orthogonal Matching Pursuit (OMP).}
		\label{alg:OMP}
	\end{algorithm}
	
	Here, $ S^ {(n + 1)} $ is the support of the signal at iteration $ n + 1 $, with the initializations $ x^{(0)} = 0 $ and $ S^{(0)} = \emptyset $. Throughout the paper, we refer to $ |A^*(y - Ax^{(n)})| $ as the ``greedy selection quantity". At each iteration of OMP, the greedy selection step introduces a competition among the column indices of $ A $ to join the support. The column with the strongest correlation to the residual $ r^{(n)}:= y - Ax^ {(n)} $ wins the competition and its index is added to the support.
	
	If any sort of prior knowledge about the signal is available, OMP can be extended to the weighted setting by appropriately modifying the greedy selection quantity, and introducing a degree of ``unfairness" in the competition. A semantic way of achieving this is to embed a weight vector (typically with values in the range $ [0, 1] $) as the diagonal entries of a diagonal matrix
	\begin{equation}\label{eq:weights}
		W = \text{diag}(w) \in \bR^{N \times N},\ \text{where}\ w \in \bR^N,\ w \geq 0.
	\end{equation}
	The modified greedy selection quantity then becomes $ |WA^*(y - Ax^{(n)})| $, effectively prioritizing certain indices over others. This idea has been explored in prior works \cite{bouchot2017multi,li2013weighted}.
	
	\subsubsection{IHT}\label{subsec:IHT} $ k $-sparse IHT was introduced in \cite{blumensath2008iterative} from an optimization transfer perspective, but can also be viewed as a fixed-point algorithm. For our purpose, however, it is more convenient to interpret IHT as combining two steps: (1) a descent step on the least-square loss function $ f(x) = \frac{1}{2}\|y - Ax\|_2^2 $, performed via  a Bregman-type iteration, and (2) a best $ k $-term approximation step to enforce sparsity.
	
	In the descent step, IHT moves the solution towards the gradient descent direction with an appropriate amount determined by a step size $ \eta > 0 $, resulting in
	\[u^{(n + 1)} = x^{(n)} - \eta \nabla_xf(x)\big|_{x = x^{(n)}} = x^{(n)} + \eta A^*(y - Ax^{(n)}).\]
	However, $ u^{(n + 1)} $ is generally non-sparse and thus, to impose the desired sparsity, IHT performs a greedy step via the best $ k $-term approximation 
	\[x^{(n + 1)} \in \argmin{z \in \bC^N,\ \|z\|_0 \le k}{\|z - u^{(n + 1)}\|_2^2},\]
	which explains why we place $ k $-sparse IHT as a greedy algorithm alongside OMP. This step is equivalent to applying the hard-thresholding operator $ H_k: \bC^N \to \bC^N $, that retains the top $ k $ entries of $ u^{(n + 1)} $ (in absolute value) and sets the rest to zero. More formally, let $ \rho: [N] \to [N] $ be a bijection that sorts elements of $ |u^{(n + 1)}| $ in non-increasing order $ |u_{\rho(1)}^{(n + 1)}| \geq |u_{\rho(2)}^{(n + 1)}| \geq \dots \geq |u_{\rho(N)}^{(n + 1)}| $. The support of $ x^{(n + 1)} $ is then given by $ S^{(n + 1)} = \{\rho(1), \dots, \rho(k)\} $, and the update becomes
	\[x^{(n + 1)} = H_k(u^{(n + 1)}) = u_{S^{(n + 1)}}^{(n + 1)}.\]
	Combining these two steps, $ k $-sparse IHT is summarized in \Cref{alg:IHT}.
	
	\begin{algorithm}[htbp]
		\textbf{Input:} desired sparsity $ k $, mixing matrix $ A \in \bC^{m \times N} $ with $ \ell^2 $-normalized columns, observation vector $ y \in \bC^m $, initial signal $ x^{(0)} \in \bC^N $, step size $ \eta > 0 $, number of iterations $\bar{n}$.\\
		\vspace{-4mm}
		\begin{tcolorbox}[boxsep=3pt,
			left=1pt,
			right=1pt, colback=white, colframe=black, boxrule=0.5mm, opacityframe=0.3]
			For $ n = 0, \dots, \bar{n} - 1 $ repeat:
			\begin{equation}
				x^{(n + 1)} = H_k(x^{(n)} + \eta A^*(y - Ax^{(n)})). \tag{IHT}
			\end{equation}
		\end{tcolorbox}
		\textbf{Output:} $ k $-sparse vector $ \hat{x} = x^{(\bar{n})} \in \bC^N $.
		\caption{Iterative Hard Thresholding (IHT).}
		\label{alg:IHT}
	\end{algorithm}
	
	To emphasize or suppress specific entries based on provided prior knowledge on the signal, one can replace the hard-thresholding operator $ H_k(\cdot) $ with the weighted hard-thresholding operator $ H_{w, k}(\cdot) $. This operator relies on the bijection $ \rho: [N] \to [N] $ associated this time with the non-increasing rearrangement of $ |Wu^{(n + 1)}| $, where $ W $ is the diagonal matrix defined in \Cref{eq:weights}, with the ordering $ |w_{\rho(1)}u_{\rho(1)}^{(n + 1)}| \geq |w_{\rho(2)}u_{\rho(2)}^{(n + 1)}| \geq \dots \geq |w_{\rho(N)}u_{\rho(N)}^{(n + 1)}| $. As in the standard case, the support of $ x^{(n + 1)} $ is then defined as $ S^{(n + 1)} = \{\rho(1), \dots, \rho(k)\} $, and the update becomes $ x^{(n + 1)} = H_{w, k}(u^{(n + 1)}) = u_{S^{(n + 1)}}^{(n + 1)} $. The idea of weighted IHT has been previously proposed in \cite{jo2013iterative}, albeit with a technical variation regarding weighted cardinality, which we omit purposefully to avoid unnecessary complications.
	
	\subsection{The (arg)sorting issue in unrolling greedy algorithms}
	Algorithm unrolling is the process of mapping iterations of an algorithm onto the layers of a neural network, with each layer playing the role of one iteration of the original algorithm. Treating certain parameters of the neural network as trainable, the objective of algorithm unrolling is to optimize the performance of the algorithm with respect those parameters through a gradient-based learning process for the neural network, possibly with fewer layers (iterations) than the original algorithm, thereby reducing the computational cost. This paradigm offers significant benefits on both the neural network and algorithmic fronts. Since algorithms typically stem from well-defined models, they bring interpretability to the neural network. Moreover, if the original algorithm enjoys recovery and convergence guarantees, these properties can often be transferred to the neural network. This alleviates common concerns about neural networks, such as their black-box nature, potential instability and lack of interpretability \cite{colbrook2022difficulty,kutyniok2024mathematics}.
	
	Despite the appeal of algorithm unrolling, not all algorithms can be readily unrolled. Key elements acting against ``unrollability" of some algorithms include the presence of non-differentiable operators (see \Cref{remark:differentiability}) and implicit dependencies between the algorithm's steps, which hinders the flow of gradients during backpropagation in gradient-based learning. Notably, greedy algorithms like OMP and IHT are illustrative examples of these challenges as they rely on the operator argsort. 
	
	More precisely, for an input vector $ v \in \bR^N $ the operators $ \text{sort}: \bR^N \to \bR^N $ and $ \text{argsort}: \bR^N \to \mathcal{S} \subset [N]^N $ (assuming non-increasing ordering) are defined as follows:
	\begin{equation}\label{eq:sorting}
		\begin{aligned}
			&\sort(v)=(v_{\varrho_1}, v_{\varrho_2}, \dots, v_{\varrho_N}),\quad v_{\varrho_1} \geq v_{\varrho_2} \geq \dots \geq v_{\varrho_N}, \\
			&\argsort(v)=(\varrho_1, \varrho_2, \dots, \varrho_N),
		\end{aligned}
	\end{equation}
	where $ \mathcal{S} $ is the set of all permutations of $[N]$. \Cref{fig:softsort} graphically illustrates these operators in a toy example with $N = 2$ and $v_2 = 1$ fixed. Throughout the paper, we assume there are no ties between the elements of the vector $v$, i.e., $v_i \neq v_j,\ i \neq j$ for all $i, j = 1, \dots, N $. This assumption is justified by the observation that, when running OMP and IHT, ties are very rare in practice due to the presence of numerical round-off errors.
	
	The argmax operator in OMP and the hard-thresholding operator in IHT both depend on argsort, which is a discontinuous operator with abrupt kinks and jumps, rendering these algorithms entirely non-differentiable. These algorithms involve the crucial step $ \mathcal{J}^{(n + 1)} =  \argsort(v^{(n + 1)})$ within their iterations, where $ v^{(n + 1)} $ is the vector to be sorted, i.e., $ v^{(n + 1)} = |A^*(y - x^{(n)})| $ and $ v^{(n)} = |u^{(n + 1)}| $ for OMP and IHT, respectively. IHT selects the first $ k $ indices of $ \mathcal{J}^{(n + 1)} $ and passes the corresponding entries to $ x^{(n + 1)} $. This breaks the gradient path of $ x^{(n + 1)} $ with respect to any parameter (e.g., $ \eta $), due to the argsort operator. This issue is even more pronounced for OMP. After computing $ \mathcal{J}^{(n + 1)} $, it adds the first index to the current support $ S^{(n)} $ to form the updated support $ S^{(n + 1)} $. The next step solves a least-squares problem restricted to the indices in $ S^{(n + 1)} $. This introduces not only non-differentiability due to application of the argsort operator, but also an implicit connection between steps of the algorithm, which is doubly problematic.
	\begin{figure}[t!]
		\centering
		\includegraphics[width = 0.75\linewidth]{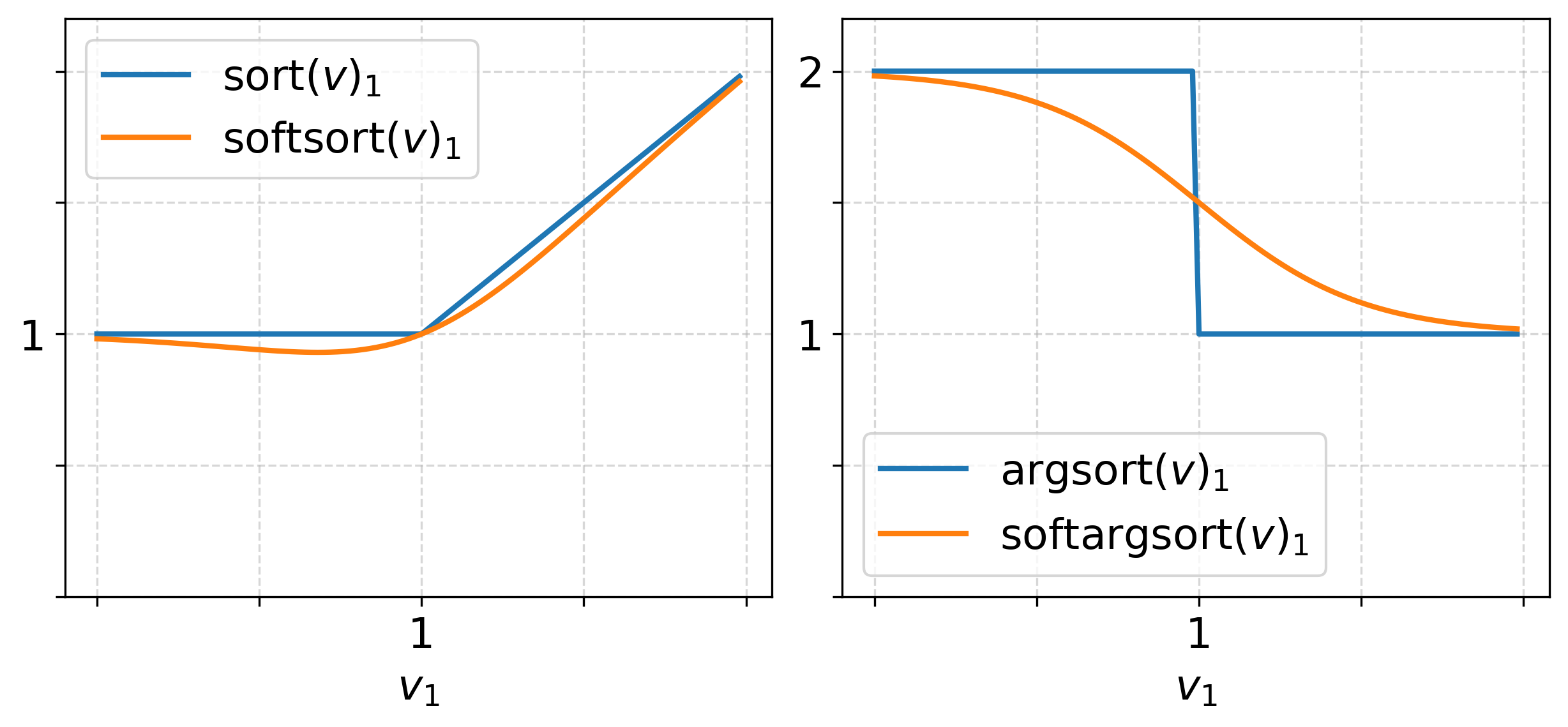}
		\caption{First element of the operators sort, softsort, argsort and softargsort applied to the two-dimensional vector $v = (v_1, v_2)$, as defined in \Cref{eq:sorting,eq:permutation,eq:softsort}, shown as a function of $v_1$ for fixed $v_2 = 1$ ($\tau = 0.25$).}
		\label{fig:softsort}
	\end{figure}
	\begin{remark}[Algorithmic differentiability and subgradients]\label{remark:differentiability}
		In neural network optimization, certain types of non-differentiability such as corner singularities, are often manageable by employing subgradient generalizations, i.e., substituting the gradient at the non-differentiable point with an appropriate subgradient. Examples are functions such as $f(x) = |x|$ and $f(x) = \text{ReLU}(x)$ at $x = 0$ or $\sort(x)$ as shown in \Cref{fig:softsort}. However, the more problematic form of non-differentiability in neural networks is discontinuity, as seen in the argsorting operator (see \Cref{fig:softsort}), which cannot be treated using such standard remedies.
	\end{remark}
	
	\section{Unrolled greedy algorithms}
	\label{sec:unrolled_algorithms}
	As seen previously, the primary obstacle hindering unrolling greedy algorithms is the presence of argsort operator within their iterations. However, to address this challenge we must first make the connections between algorithmic steps explicit to preserve gradient flow across iterations. The ``missing link" to bridging non-differentiable algorithms with implicit steps and differentiable algorithms with explicit gradient path lies in the notion of permutation, intrinsic to (arg)sorting.
	
	More formally, (arg)sorting can be characterized by a permutation matrix $ P_{\argsort(\cdot)} \in \bR^{N \times N} $, as for a vector $ v \in \bR^N $
	\begin{equation}\label{eq:permutation}
		\begin{aligned}
			&\sort(v) = P_{\argsort(v)}v,\quad \argsort(v) = P_{\argsort(v)}\bar{1}_N,\\
			&P_{\argsort(v)}(i,j) = \begin{cases}
				1, & j = \argsort(v)_i, \\
				0, & \text{otherwise}
			\end{cases},
		\end{aligned}
	\end{equation}
	where $ \bar{1}_N = (1, \ldots, N)^\top$ is the vector corresponding to the set $[N]$. Note that the matrix operator $ P_{\argsort(\cdot)} $ is well-defined for a fixed permutation in $ \bR^N $; otherwise, argsort is fundamentally a nonlinear operator.
	
	Reinterpreting steps of OMP and IHT through a projection-based lens allows us to explicitly express algorithmic steps in terms of permutation matrices, but this alone does not resolve the non-differentiability issue. The key remaining task is to replace these permutation matrices with differentiable proxies that approximate them with a controllable degree of accuracy. In what follows, we introduce a reasonable choice for this proxy---\emph{softsort}---and use it to construct differentiable approximations of OMP and IHT, which we theoretically justify as valid alternatives to the exact algorithms.
	
	\subsection{Softsorting}
	Softsort, introduced in \cite{prillo2020softsort}, provides a continuous relaxation of the argsort operator (see \Cref{fig:softsort}) by approximating $ P_{\argsort(\cdot)} $ with a permutation matrix $ \tilde{P}_{\argsort(\cdot)} $. For a column vector $ v \in \bR^N $, softsort is defined as
	\begin{equation}\label{eq:softsort}
		\tilde{P}_{\argsort(v)} = \softsort(v) = \softmax\left(\frac{-|\sort(v)\mathds{1}^\top - \mathds{1}v^\top|}{\tau}\right),
	\end{equation}
	where the softmax function, applied row-wise is given by
	\[\softmax(a)_j = \frac{e^{a_j}}{\sum_{i = 1}^{N}e^{a_i}},\quad j \in [N],\quad a \in \bR^N,\]
	$ \mathds{1} = (1, \dots, 1)^\top \in \bR^N $ is an all-one vector, and $ \tau $ is a \emph{temperature parameter} that controls the approximation accuracy. Softsort applies softmax on the negative distance matrix derived from $ v $ and $ \sort(v) $, emphasizing locations where $ v $ and $ \sort(v) $ coincide, corresponding to the positions of elements in $ \argsort(v) $ (see \hyperref[props:argmax]{\Cref{prop:properties}\ref{props:argmax}}). With sufficiently small $ \tau $, softsort produces dominant values at these positions and small values at others, effectively approximating the discrete argsort operator. This is exemplified below.
	\begin{example}
		Let $ x = (3, 4, 2, 1)^\top $.
		\begin{align*}
			P_{\argsort(x)} & = \begin{bmatrix}
				0 & 1 & 0 & 0\\
				1 & 0 & 0 & 0\\
				0 & 0 & 1 & 0\\
				0 & 0 & 0 & 1
			\end{bmatrix},\\
			\sort(x) & = P_{\argsort(x)}x = (4, 3, 2, 1)^\top,\\
			\argsort(x) & = P_{\argsort(x)}\bar{1}_N = (2, 1, 3, 4)^\top,
		\end{align*}
		which for $\tau = 0.5$ yields the approximation
		\begin{align*}
			\tilde{P}_{\argsort(x)} & = \mathrm{softsort}(x, \tau = 0.5) = \begin{bmatrix}
				0.1171 & \mathbf{0.8650} & 0.0158 & 0.0021\\
				\mathbf{0.7758} & 0.1050 & 0.1050 & 0.0142\\
				0.1050 & 0.0142 & \mathbf{0.7758} & 0.1050\\
				0.0158 & 0.0021 & 0.1171 & \mathbf{0.8650}
			\end{bmatrix}, \\
			\tilde{P}_{\argsort(x)}x & = (3.8448, 2.9716, 2.0284, 1.1552)^\top, \\
			\tilde{P}_{\argsort(x)}\bar{1}_N & = (1.9031, 1.3576, 2.8808, 3.8311)^\top.
		\end{align*}
	\end{example}
	Softsort as in \Cref{eq:softsort} is an instance of a ``unimodal row-stochastic matrix". It possesses several desirable properties listed below.
	\begin{proposition}[Properties of the softsort operator \cite{prillo2020softsort}]\label{prop:properties}
		Softsort satisfies:
		\begin{enumerate}[leftmargin=1.2cm, label= (\alph*)]
			\item\label[property]{props:nonneg} Non-negativity: $ \tilde{P}_{\argsort(v)}(i, j) \geq 0 $.
			\item\label[property]{props:row_affine} Row affinity: Because of softmax, elements in each row of $ \softsort(v) $ sum to 1, i.e., $ \sum_{j = 1}^{N}\tilde{P}_{\argsort(v)}(i, j) = 1,\ \forall i \in [N]. $
			\item\label[property]{props:asym} Asymptotic behavior: $ \softsort(v) \to P_{\argsort(v)} $, as $ \tau \to 0 $.
			\item\label[property]{props:argmax} Argmax property: $ \argsort(v) = \argmax{}{(\tilde{P}_{\argsort(v)})} $, where argmax is applied row-wise.
			\item\label[property]{props:perm_eq} Permutation equivariance: $ \softsort(v) = \softsort(\sort(v))P_{\argsort(v)} $, which can be viewed as a direct consequence of \Cref{props:argmax}.
		\end{enumerate}
	\end{proposition}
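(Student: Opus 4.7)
The plan is to treat (a), (b), and (d) as essentially immediate consequences of the definition, then spend most of the energy on (c), the asymptotic convergence, and on (e), the permutation equivariance. I would organize the proof in the order (a), (b), (d), (c), (e), since (d) is used in (c) and (e) reuses the same structural observation about sort.

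For (a), I would observe that each entry of $\softsort(v)$ is the output of a softmax applied to a real-valued row, and therefore has the form $e^{a_j}/\sum_i e^{a_i} > 0$. Property (b) follows from the same formula: summing the softmax outputs over the row index $j$ gives $\sum_j e^{a_j}/\sum_i e^{a_i} = 1$. For (d), since $e^{-t/\tau}$ is strictly decreasing in $t \geq 0$ and the softmax is strictly monotone in its argument, the row-wise argmax of $\softsort(v)$ is attained at the index $j$ minimizing $|\sort(v)_i - v_j|$; by the no-ties assumption stated just before \Cref{eq:sorting}, this minimum is zero and is uniquely achieved at $j = \argsort(v)_i$.

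For (c), I would fix a row $i$ and analyze $\tilde{P}_{\argsort(v)}(i,j) = e^{-|\sort(v)_i - v_j|/\tau} / \sum_\ell e^{-|\sort(v)_i - v_\ell|/\tau}$ as $\tau \to 0^+$. Letting $j^\star = \argsort(v)_i$, the numerator equals $1$ at $j = j^\star$ (since $\sort(v)_i = v_{j^\star}$) and equals $e^{-c/\tau}$ with $c = |\sort(v)_i - v_j| > 0$ when $j \neq j^\star$, again using the no-ties assumption. Dividing numerator and denominator by the $j^\star$ term yields $\tilde{P}_{\argsort(v)}(i,j^\star) = (1 + \sum_{\ell \neq j^\star} e^{-c_\ell/\tau})^{-1}$, which tends to $1$, while $\tilde{P}_{\argsort(v)}(i,j)$ for $j \neq j^\star$ tends to $0$. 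Comparing with \Cref{eq:permutation} recovers $P_{\argsort(v)}$ in the limit.

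For (e), the key identity is $\sort(\sort(v)) = \sort(v)$, so the sorted vector appearing in the definition of $\softsort(\sort(v))$ is the same as the one in $\softsort(v)$. The two distance matrices therefore differ only in that the columns indexed by $v$ are replaced by columns indexed by $\sort(v)$; since $\sort(v) = P_{\argsort(v)} v$, right-multiplying $\softsort(\sort(v))$ by $P_{\argsort(v)}$ reorders its columns back to the ordering of $v$, which matches the column indexing of $\softsort(v)$. (Because softmax is applied row-wise, the permutation passes through the normalization untouched, so the identity holds at the level of raw softmax outputs.) This last bookkeeping step, getting the direction of the column permutation right, is the only spot I would expect to require some care; everything else is a direct unpacking of the definition together with the strict separation of entries guaranteed by the no-ties assumption.
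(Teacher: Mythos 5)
The paper does not prove this proposition at all: it imports properties (a)--(e) directly from the cited softsort paper \cite{prillo2020softsort} and offers no argument of its own, so there is no in-paper proof to compare against. Your verification from the definition in \Cref{eq:softsort} is correct. Items (a) and (b) are indeed immediate from the softmax formula; your treatment of (d) correctly reduces the row-wise argmax to minimizing $|\sort(v)_i - v_j|$ and correctly invokes the no-ties assumption for uniqueness; and your limit argument for (c), dividing through by the dominant term $e^{0}=1$ at $j^\star=\argsort(v)_i$, is the standard and right way to get $\tilde{P}_{\argsort(v)}\to P_{\argsort(v)}$. On (e), your bookkeeping is also correct: writing $\varrho=\argsort(v)$, the pre-softmax entry of $\softsort(\sort(v))$ at $(i,k)$ is $-|\sort(v)_i - v_{\varrho_k}|/\tau$, and right-multiplication by $P_{\argsort(v)}$ sends column $\varrho^{-1}(j)$ to column $j$, recovering exactly the $(i,j)$ entry $-|\sort(v)_i - v_j|/\tau$ of $\softsort(v)$; since the softmax normalizer is a row sum, it is unaffected by the column permutation. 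This is in fact more careful than the paper's own framing, which dismisses (e) as ``a direct consequence of'' the argmax property (d) --- a statement that is suggestive but not literally a derivation. The only caveat worth recording is that (c) and the uniqueness part of (d) genuinely require the no-ties hypothesis $v_i\neq v_j$ for $i\neq j$ (with ties, $P_{\argsort(v)}$ itself is ambiguous and the softmax mass splits between tied indices); you flag this correctly in both places.
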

	In addition to the above properties, softsort is also a continuous operator. While continuity of softsort was assumed without formal discussion in \cite{prillo2020softsort}, we establish a stronger result in \Cref{prop:continuity} below. Its proof can be found in \Cref{appendix:continuity}.
	\begin{proposition}[Lipschitzness of softsort]\label{prop:continuity}
		$ \softsort(\cdot): \bR^N \to \bR^{N \times N} $ is a continuous function. Moreover, the function defined by each row of the softsort operator, i.e., $\bR^N \ni v \in \bR^N \mapsto \softsort(v)(i, :)  $, is $ L $-Lipschitz, where $ L = (\sqrt{N} + 1)/\tau $.
	\end{proposition}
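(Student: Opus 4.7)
The plan is to decompose the $i$-th row of $\softsort$ as the composition of three maps whose Lipschitz constants can be tracked individually: (1) the coordinate $v \mapsto \sort(v)_i$, (2) the assembly of the softmax input vector $a_i(v) \in \bR^N$ with entries $a_i(v)_j = -|\sort(v)_i - v_j|/\tau$, and (3) the softmax function applied to $a_i(v)$. Continuity of the matrix-valued map $\softsort:\bR^N\to\bR^{N\times N}$ then follows at once from row-wise continuity, while the constant $(\sqrt{N}+1)/\tau$ should drop out of a clean bookkeeping of norms.

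First I would record the classical fact that sorting is non-expansive in $\ell^\infty$: $|\sort(u)_i - \sort(v)_i| \leq \|u-v\|_\infty \leq \|u-v\|_2$ for every $i\in[N]$. Next, for each fixed row index $i$, the reverse triangle inequality yields, coordinate-wise in $j$,
\begin{equation*}
|a_i(u)_j - a_i(v)_j| \;\leq\; \tfrac{1}{\tau}\bigl(|\sort(u)_i - \sort(v)_i| + |u_j - v_j|\bigr),
\end{equation*}
and taking $\ell^2$ norms in $j$ together with the triangle inequality in $\bR^N$ gives
\begin{equation*}
\|a_i(u) - a_i(v)\|_2 \;\leq\; \tfrac{1}{\tau}\bigl(\sqrt{N}\,|\sort(u)_i - \sort(v)_i| + \|u-v\|_2\bigr) \;\leq\; \tfrac{\sqrt{N}+1}{\tau}\|u-v\|_2.
\end{equation*}
The factor $\sqrt{N}$ enters only through the single step of promoting the scalar $|\sort(u)_i-\sort(v)_i|$ to a constant vector in $\bR^N$, which is where the Lipschitz constant is pinned down.

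Finally I would show that softmax $\sigma:\bR^N\to\bR^N$ is $1$-Lipschitz in $\ell^2$. Its Jacobian is $J = \diag(p) - pp^\top$ with $p = \sigma(a)$, and for any unit vector $x\in\bR^N$ one has $x^\top J x = \sum_i p_i x_i^2 - (\sum_i p_i x_i)^2 \leq \sum_i p_i x_i^2 \leq \|x\|_\infty^2 \leq 1$, so $\|J\|_{2\to 2}\leq 1$ uniformly in $a$, and hence $\sigma$ is $1$-Lipschitz by the mean value inequality. Composing the three Lipschitz bounds, the $i$-th row map $v \mapsto \softsort(v)(i,:) = \sigma(a_i(v))$ inherits the constant $(\sqrt{N}+1)/\tau$, proving the stated Lipschitzness, and the matrix map $\softsort$ is in particular continuous. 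I do not expect any serious obstacle here; the only subtle point is choosing norms so as not to pick up spurious factors—specifically using the $\ell^\infty$-contraction of $\sort$ and the sharp $\ell^2\!\to\!\ell^2$ bound of $1$ for softmax (which is easily conflated with looser bounds in the literature).
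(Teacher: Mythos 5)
Your proposal is correct and follows essentially the same route as the paper's proof: both bound the row map as softmax composed with the assembly $v \mapsto -|\sort(v)_i\mathds{1} - v|/\tau$, use the $\ell^\infty$ non-expansiveness of sorting, pick up the $\sqrt{N}$ from promoting the scalar $|\sort(u)_i - \sort(v)_i|$ to a constant vector, and invoke the $1$-Lipschitzness of softmax (which the paper cites from the literature, whereas you verify it directly via the Jacobian $\diag(p) - pp^\top$). The only point the paper treats more carefully is justifying $\|\sort(u) - \sort(v)\|_\infty \leq \|u-v\|_\infty$ for vectors with arbitrary signs, by shifting to reduce to the non-negative case; your appeal to this as a classical fact is nonetheless sound.
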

	Continuity of softsort, and hence its algorithmic differentiability (see \Cref{remark:differentiability}), enables neural network implementations of the argsort operator. Building on this foundation, in the following subsections we use the approximate permutation matrix obtained from softSort to unfold OMP and IHT, respectively. However, the Lipschitz constant $ L $ in the proposition above tends to infinity as $ \tau \to 0 $, indicating that continuity of softsort deteriorates for small values of $ \tau $. This introduces a trade-off for $ \tau $ between the continuity of softsort and its approximation, as highlighted in \hyperref[props:asym]{\Cref{prop:properties}\ref{props:asym}}, which states that softsort more closely approximates the true sorting operator as $ \tau $ decreases. Thus, selecting an appropriate $ \tau $ is crucial in balancing theory and practice in the upcoming sections. Theoretical constraints impose an upper bound for $ \tau $, beyond which the desired approximation is not achieved, while practical considerations in training the neural networks demand for higher values of $ \tau $ (see \Cref{sec:numerics}).
	
	\subsection{Soft-OMP}
	The first step to unroll OMP (see \Cref{alg:OMP}) is to reinterpret its iterations through a projection (permutation-based) lens. This not only enables the direct application of soft sorting but also explicitly reveals the connection between successive OMP iterations, facilitating efficient gradient flow.
	
	At iteration $ n + 1 $, OMP computes the projection of the observation signal $ y $ onto an $ (n + 1) $-dimensional subspace of $ \mathcal{R}(A) $, spanned by the columns iteratively selected during the greedy selection step, i.e., $ A_{S^{(n + 1)}} \in \bC^{m \times (n + 1)} $. This subspace expands over iterations, progressively improving the approximation. This process is equivalent to applying a permutation matrix $ \Pi^{(n + 1)} \in \bR^{(n + 1) \times N} $ to the column space of $ A $, yielding $ A(\Pi^{(n + 1)})^\top =: B^{(n + 1)} $, where rows of $ \Pi^{(n + 1)} $ correspond to $ n + 1 $ selected indices in $ S^{(n + 1)} $, i.e., in each row there is a 1 at the location of the selected index and 0 elsewhere. Since OMP selects one additional column at each iteration, the permutation matrix $ \Pi^{(n + 1)} $ is constructed incrementally by appending a new row to the previous iteration’s permutation matrix $ \Pi^{(n + 1)} = [\Pi^{(n)}; P^{(n + 1)}(1, :)] $, where $ P^{(n + 1)} := P_{\argsort(v^{(n + 1)})} $ (as in \Cref{eq:permutation}), with $ v^{(n + 1)} = |A^*(y - Ax^{(n)})| $. Once this subspace is established, the next step is to solve the least-squares problem within it, followed by mapping the resulting $ (n + 1) $-dimensional signal back to the original $ N $-dimensional space. These steps define the following algorithm, which we term \emph{pOMP} (projection-based OMP), serving as an intermediary between OMP and its unrolled counterpart.
	
	\begin{algorithm}[htbp]
		\textbf{Input:} desired sparsity $k$, mixing matrix $ A \in \bC^{m \times N} $ with $ \ell^2 $-normalized columns, observation vector $ y \in \bC^m $.\\
		\vspace{-4mm}
		\begin{tcolorbox}[boxsep=3pt,
			left=1pt,
			right=1pt, colback=white, colframe=black, boxrule=0.5mm, opacityframe=0.3]
			Let $ x^{(0)} = 0 $ and $ \Pi^{(0)} = [\ ] $.\\
			For $ n = 0, \dots, k - 1 $ repeat:
			\begin{align*}
				&\begin{cases}
					v^{(n + 1)} = \left|A^*(y - Ax^{(n)})\right|, \\
					P^{(n + 1)} = P_{\argsort(v^{(n + 1)})}, \\
					\Pi^{(n + 1)} = \left[\Pi^{(n)}; P^{(n + 1)}(1, :)\right],
				\end{cases} \tag{pOMP.1}\\
				&\begin{cases}
					B^{(n + 1)} = A(\Pi^{(n + 1)})^\top, \\
					w^{(n + 1)} \in \argmin{z \in \mathbb{C}^{(n + 1)}}{\|y - B^{(n + 1)}z\|_2^2}, \\
					x^{(n + 1)} = (\Pi^{(n + 1)})^\top w^{(n + 1)}.
				\end{cases} \tag{pOMP.2}
			\end{align*}
		\end{tcolorbox}
		\textbf{Output:} $ k $-sparse vector $ \hat{x} = x^{(k)} \in \bC^N $.
		\caption{Projection-based OMP (pOMP).}
		\label{alg:pOMP}
	\end{algorithm}
	
	It is evident that pOMP generates the same sequence of output signals as OMP. In pOMP, the relationship between the vector to be sorted, $ v^{(n + 1)} $, and the output signal at that iteration, $ x^{(n + 1)} $, is established through the permutation matrix $ \Pi^{(n + 1)} $ (and $ P^{(n + 1)} $). The key obstacle to differentiability is the sorting-based selection operator $ P_{\argsort(\cdot)} $, which is inherently non-differentiable and blocks gradient flow. To address this, we approximate the permutation matrix $ P_{\argsort(\cdot)} $ in a differentiable manner by replacing it with a softsort proxy. This modification immediately yields a differentiable approximation of OMP, which we refer to as \emph{Soft-OMP}, because of the softsort operator.  
	
	\begin{algorithm}[htbp]
		\textbf{Input:} desired sparsity $k$, mixing matrix $ A \in \bC^{m \times N} $ with $ \ell^2 $-normalized columns, observation vector $ y \in \bC^m $,  temperature parameter $ \tau $ of softsort.\\
		\vspace{-4mm}
		\begin{tcolorbox}[boxsep=3pt,
			left=1pt,
			right=1pt, colback=white, colframe=black, boxrule=0.5mm, opacityframe=0.3]
			Let $ \tilde{x}^{(0)} =0 $ and $ \tilde{\Pi}^{(0)} = [\ ] $.\\
			For $ n = 0, \dots, k - 1 $ repeat:
			\begin{align*}
				&\begin{cases}
					\tilde{v}^{(n + 1)} = \left|A^*(y - A\tilde{x}^{(n)})\right|, \\
					\tilde{P}^{(n + 1)} = \softsort(\tilde{v}^{(n + 1)}), \\
					\tilde{\Pi}^{(n + 1)} = \left[\tilde{\Pi}^{(n)}; \tilde{P}^{(n + 1)}(1, :)\right],
				\end{cases} \tag{Soft-OMP.1}\\
				&\begin{cases}
					\tilde{B}^{(n + 1)} = A(\tilde{\Pi}^{(n + 1)})^\top, \\
					\tilde{w}^{(n + 1)} \in \argmin{z \in \mathbb{C}^{(n + 1)}}{\|y - \tilde{B}^{(n + 1)}z\|_2^2}, \\
					\tilde{x}^{(n + 1)} = (\tilde{\Pi}^{(n + 1)})^\top \tilde{w}^{(n + 1)}.
				\end{cases} \tag{Soft-OMP.2}
			\end{align*}
		\end{tcolorbox}
		\textbf{Output:} Approximately $k$-sparse vector $ \hat{x} = \tilde{x}^{(k)} \in \bC^N $.
		\caption{Soft-OMP.}
		\label{alg:Soft-OMP}
	\end{algorithm}
	
	\begin{remark}
		Although we choose softsort for its close-form and intuitive formulation, it is worth noting that the core idea in this paper is to provide a suitable differentiable approximation for $ P_{\argsort(\cdot)} $ and replacing it with the sofsort operator is just one of several possible techniques. See \Cref{sec:conclusion}, for other alternatives.
	\end{remark}
	\begin{remark}
		Since each iteration of Soft-OMP requires only the first row of the softsort matrix, we use a more memory-efficient and computationally optimized implementation
		\[\begin{aligned}
			&\tilde{P}^{(n + 1)}(1, :) = \softsort(\tilde{v}^{(n + 1)})(1, :) = \softmax\left(\frac{-\left|\tilde{v}^{(n + 1)}_{\tilde{j}^{(n + 1)}}\mathds{1} - \tilde{v}^{(n + 1)}\right|}{\tau}\right),\\
			&\text{where}\ \tilde{j}^{(n + 1)} = \left(\argsort(\tilde{v}^{(n + 1)})\right)_1 = \argmax{j \in [N]}{\tilde{v}_j^{(n + 1)}}.
		\end{aligned}\]
	\end{remark}
	How well can Soft-OMP approximate OMP? As noted earlier, softsort approximates $ P_{\argsort(\cdot)} $ with accuracy controlled by its temperature parameter $ \tau $. The following theorem formalizes this idea, relating Soft-OMP to pOMP (and consequently OMP) under an appropriate condition on $ \tau $. 
	\begin{theorem}[Soft-OMP is a good approximation to OMP]
		\label{thm:omp_main}
		Let $ v^{(n)} \in \bR^N$, $ \Pi^{(n)} \in \bR^{n \times N} $ and $ x^{(n)} \in \bC^{N} $ be sequences generated by \Cref{alg:pOMP}; and likewise $ \tilde{v}^{(n)} \in \bR^N$, $ \tilde{\Pi}^{(n)} \in \bR^{n \times N} $ and $ \tilde{x}^{(n)} \in \bC^{N} $ be sequences generated by \Cref{alg:Soft-OMP}. Then, the following hold:
		\begin{enumerate}[leftmargin=1.2cm, label=(\roman*)]
			\item Asymptotic convergence: As $ \tau \to 0 $, we have $ \tilde{\Pi}^{(n)} \to \Pi^{(n)}$ and $\tilde{x}^{(n)} \to x^{(n)} $.
			\item Non-asymptotic convergence: Choose
			\[0 < \epsilon < g^{(1:n)}/2\|A\|,\]
			where
			\begin{equation}\label{eq:omp-gap}
				\begin{aligned}
					g^{(1:n)} &:= \min_{i \in [n]}g^{(i)}, \\
					g^{(i)} &:= \min_{j \in [N], j\neq j^{(i)}}\left|v_j^{(i)} - v_{j^{(i)}}^{(i)}\right|\ \text{with}\ j^{(i)} := \argmax{}{(v^{(i)})},
				\end{aligned}
			\end{equation}
			are OMP's ``global min-gap" and ``local min-gap", respectively. Then
			\[\argmax{}{(\Pi^{(n)})} = \argmax{}{(\tilde{\Pi}^{(n)})},\]
			where argmax is applied row-wise.
			
			Moreover, assume that $\tau$, the temperature parameter of $\text{softsort}$, satisfies the condition
			\[\tau \leq \left(g^{(1:k)} - 2\|A\|\epsilon\right)/\log\left(C^{(k)}/\epsilon\right),\]
			where 			
			\begin{equation*}
				C^{(n)} = \sqrt{2n}(N - 1)\left(\frac{\sqrt{1 - \delta_n(A)} + (\sqrt{n} + 1)\|A\|}{1 - \delta_n(A)}\right)\|y\|_2,
			\end{equation*}
			and $ \delta_n(A) < 1 $ is the $ n $th restricted isometry constant of $ A $. Then
			\[\max_{i \in [0:n]}\left\|x^{(i)} - \tilde{x}^{(i)}\right\| \leq \epsilon.\]
		\end{enumerate}
	\end{theorem}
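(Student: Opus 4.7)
\textbf{Proof plan for \Cref{thm:omp_main}.}
The plan is induction on the iteration index $n$. The base case $n=0$ is immediate since both algorithms initialize identically: $x^{(0)} = \tilde{x}^{(0)} = 0$ and $\Pi^{(0)} = \tilde{\Pi}^{(0)} = [\ ]$.

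\emph{Part (i) (asymptotic).} Assuming inductively that $\tilde{x}^{(n)} \to x^{(n)}$ and $\tilde{\Pi}^{(n)} \to \Pi^{(n)}$ as $\tau \to 0$, continuity of the map $x \mapsto |A^*(y - Ax)|$ gives $\tilde{v}^{(n+1)} \to v^{(n+1)}$. Combining \Cref{prop:continuity} with \hyperref[props:asym]{\Cref{prop:properties}\ref{props:asym}} yields $\tilde{P}^{(n+1)} \to P^{(n+1)}$, hence $\tilde{\Pi}^{(n+1)} \to \Pi^{(n+1)}$. Since the RIC ($\delta_{n+1}(A) < 1$) ensures $B^{(n+1)}$ has full column rank, the least-squares solution is continuous in its design matrix, and we obtain $\tilde{w}^{(n+1)} \to w^{(n+1)}$ and thus $\tilde{x}^{(n+1)} \to x^{(n+1)}$.

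\emph{Part (ii) (non-asymptotic).} The inductive hypothesis is $\|x^{(i)} - \tilde{x}^{(i)}\| \leq \epsilon$ for $i \leq n$ together with row-wise $\argmax$ equality of $\Pi^{(i)}$ and $\tilde{\Pi}^{(i)}$. The inductive step splits into four substeps.
\emph{Step 1 (greedy-quantity stability):} The reverse triangle inequality and $\ell^2$-normalization of columns of $A$ give
\[
\|\tilde{v}^{(n+1)} - v^{(n+1)}\|_\infty \leq \|A^*A(x^{(n)} - \tilde{x}^{(n)})\|_\infty \leq \|A\|\,\|x^{(n)} - \tilde{x}^{(n)}\| \leq \|A\|\epsilon.
\]
\emph{Step 2 (argmax preservation):} The hypothesis $\epsilon < g^{(1:n)}/(2\|A\|)$ ensures the perturbation is less than half of $g^{(n+1)}$, so $\argmax{}{(\tilde{v}^{(n+1)})} = j^{(n+1)}$. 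By \hyperref[props:argmax]{\Cref{prop:properties}\ref{props:argmax}}, the first row of $\tilde{P}^{(n+1)}$ also attains its maximum at $j^{(n+1)}$, yielding the claimed row-wise $\argmax$ equality.
\emph{Step 3 (softsort row approximation):} For each $j \neq j^{(n+1)}$, the perturbed gap is at least $g^{(n+1)} - 2\|A\|\epsilon$, so each off-diagonal entry of $\tilde{P}^{(n+1)}(1,:)$ is bounded by $\exp(-(g^{(n+1)} - 2\|A\|\epsilon)/\tau)$, which the hypothesis on $\tau$ forces to be at most $\epsilon/C^{(k)}$. Combining with row-affinity (\hyperref[props:row_affine]{\Cref{prop:properties}\ref{props:row_affine}}) controls $\|\tilde{P}^{(n+1)}(1,:) - P^{(n+1)}(1,:)\|_2$, and accumulation across the $n+1 \leq k$ rows gives a Frobenius-norm bound on $\tilde{\Pi}^{(n+1)} - \Pi^{(n+1)}$ with the $\sqrt{2n}(N-1)$ prefactor appearing in $C^{(k)}$.

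\emph{Step 4 (least-squares propagation):} Writing $\tilde{B}^{(n+1)} = B^{(n+1)} + E$ with $\|E\|$ bounded by Step 3, the RIC gives $\sigma_{\min}(B^{(n+1)}) \geq \sqrt{1 - \delta_{n+1}(A)}$, so pseudo-inverse perturbation estimates together with $\|w^{(n+1)}\| \leq \|y\|/\sqrt{1-\delta_{n+1}(A)}$ bound $\|\tilde{w}^{(n+1)} - w^{(n+1)}\|$. Translating back via the decomposition
\[
\tilde{x}^{(n+1)} - x^{(n+1)} = (\tilde{\Pi}^{(n+1)})^\top(\tilde{w}^{(n+1)} - w^{(n+1)}) + (\tilde{\Pi}^{(n+1)} - \Pi^{(n+1)})^\top w^{(n+1)},
\]
and combining all prefactors, the constant $C^{(k)}$ cancels exactly, delivering $\|\tilde{x}^{(n+1)} - x^{(n+1)}\| \leq \epsilon$ as required.

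The hard part will be Step 4: the precise bookkeeping of constants through the least-squares perturbation analysis, handling the fact that $\tilde{B}^{(n+1)}$ is not a submatrix of $A$ but a near-permuted convex combination of its columns, and ensuring that the combined constants reproduce exactly the definition of $C^{(n)}$ so that the induction closes with the same $\epsilon$ rather than a slightly inflated one.
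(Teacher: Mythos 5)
Your plan follows essentially the same route as the paper's proof: induction on the iteration index, stability of the greedy selection quantity $\|\tilde{v}^{(n+1)}-v^{(n+1)}\|_\infty\le\|A\|\epsilon$, argmax preservation via the half-gap condition, Frobenius-norm bounds on $\tilde{\Pi}^{(n+1)}-\Pi^{(n+1)}$ with the $\sqrt{2n}(N-1)e^{-(g-2\|A\|\epsilon)/\tau}$ factor (the paper's \Cref{lemma:frob_bounds}), and least-squares sensitivity analysis applied to the decomposition $(\tilde{\Pi})^\top(\tilde{w}-w)+(\tilde{\Pi}-\Pi)^\top w$ (the paper's \Cref{lemma:distance_bound}, which invokes \Cref{th:Trefethen} where you invoke equivalent pseudo-inverse perturbation bounds). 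The constants and the mechanism by which the $\tau$ condition closes the induction are exactly as in the paper, so this is a faithful blueprint of the published argument.
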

	\begin{proof}[Proof (sketch)]
		We present a proof sketch here. For the complete proof, see \Cref{appendix:omp}.
		\begin{itemize}[leftmargin=1.2cm]
			\item We first ensure that Soft-OMP selects the same indices as OMP. Given the same initialization, $ x^{(0)} = \tilde{x}^{(0)} $, we establish conditions under which Soft-OMP preserves the order of elements across iterations (in fact, only the maximum is sufficient). This guarantees that $ \argmax{}{(\Pi^{(n)})} = \argmax{}{(\tilde{\Pi}^{(n)})} $.
			\item Once this is established, we formalize approximation of $ \Pi^{(n)} $ by $ \tilde{\Pi}^{(n)} $, providing upper bounds for $ \|\Pi^{(n)} - \tilde{\Pi}^{(n)}\| $ and $\|\tilde{\Pi}^{(n)}\| $, in terms of $ \tau $ in \Cref{lemma:frob_bounds}.
			\item Finally, we relate $ \|x^{(n)} - \tilde{x}^{(n)}\| $ to the bound derived for $ \|\Pi^{(n)} - \tilde{\Pi}^{(n)}\| $, analyzing the deviations introduced in $ B^{(n)} = A_{S^{(n)}} = A(\Pi^{(n)})^\top $ through the sensitivity analysis of least-squares, establishing this in \Cref{th:Trefethen,lemma:distance_bound}. We then prove the claim by induction on $ n $.
		\end{itemize}
	\end{proof}
	In words, \Cref{thm:omp_main} states that to achieve a desired accuracy $ \epsilon $, the gap between elements of $ v^{(i)} $ must be sufficiently large. This ascertains that the maximum index of $ v^{(i)} $ at each iteration of Soft-OMP matches that of OMP, meaning Soft-OMP retrieves the same indices as OMP, i.e., $ \argmax{}{(\Pi^{(n)})} = \argmax{}{(\tilde{\Pi}^{(n)})} $ (Soft-OMP follows OMP's indices). Furthermore, if $ \tau $ is small enough relative to the gap between elements of Soft-OMP, the column selector $ \tilde{\Pi}^{(n + 1)} $ will closely approximate $ \Pi^{(n + 1)} $, the column selector in OMP. As a result, the signal recovered at each iteration of Soft-OMP remains within an $ \epsilon $-distance of the signal obtained by OMP, ensuring that Soft-OMP approximates OMP to the desired $ \epsilon $-accuracy. We conclude by making a few remarks.
	\begin{remark}\label{remark:gaps}
		We note that $ g^{(i)},\ \forall i \in [n] $ , is not a tunable parameter but rather depends on the underlying ``physics" of the problem, including the values of $ A $, $ x $, and $ y $ (see \Cref{eq:SR_model}). In the first iteration, if OMP and Soft-OMP are initialized with the same signal, i.e., $ x^{(0)} = \tilde{x}^{(0)} $, it suffices for $ g^{(i)} $ to be strictly positive. However, as iterations progress, approximation errors accumulate, requiring $ g^{(i)} $ to be sufficiently large for Soft-OMP to continue closely following OMP.
	\end{remark}
	\begin{remark}\label{remark:recovery_gurantee}
		In light of recovery guarantee results available in the literature for OMP (see, e.g., \cite{zhang2011sparse}) and the inequality
		$ \|\tilde{x}^{(n)} - x\| \leq \|\tilde{x}^{(n)} - x^{(n)}\| + \|x^{(n)} - x\|, $
		where $ x \in \bC^N $ is the true signal in \Cref{eq:SR_model}, \Cref{thm:omp_main} also implies a recovery guarantee for Soft-OMP.
	\end{remark}

	\subsection{Soft-IHT}
	To unroll IHT (see \Cref{alg:IHT}), we follow the same approach as for OMP and introduce a projection-based variant for IHT called \emph{pIHT}. We define $ Q^{(n + 1)} = P_{\argsort(v^{(n + 1)})} \in \bR^{N \times N} $, where $ v^{(n + 1)} := |u^{(n + 1)}| $ is the quantity to be sorted and $ u^{(n + 1)} = x^{(n)} + \eta A^*(y - Ax^{(n)}) $ defining each iteration of IHT. Then the first $ k $ rows of $ Q^{(n + 1)} $ determine the vectors $ \{e_{\rho(i)}\}_{i = 1}^k $, spanning the $ k $-dimensional signal subspace $ \Sigma_k^N \subset \bR^N $, where $ e_{\rho(i)} $ is a canonical (one-hot) vector with 1 at position $ \rho(i) $ and 0 elsewhere, and $ \rho:[N] \to [N] $ is the bijection defined by the hard thresholding operator $ H_k(\cdot) $ introduced in \Cref{subsec:IHT}. Consequently, the output of the hard thresholding operator simply expands onto these vectors using the top-$ k $ elements of $ u^{(n + 1)} $ in magnitude as expansion coefficients
	\[x^{(n + 1)} = H_k(u^{(n + 1)}) = \sum_{i = 1}^{k}u^{(n + 1)}_{\rho(i)}Q^{(n + 1)}(i, :).\]
	This sum can be expressed more compactly as $ x^{(n + 1)} = q^{(n + 1)} \odot u^{(n + 1)} $, where $ \odot $ denotes the Hadamard (or componentwise) product and
	\[q^{(n + 1)} = \sum_{i = 1}^kQ^{(n + 1)}(i, :),\]
	the filter ($0$/$1$ mask) that selects the top-$ k $ elements of $ u^{(n + 1)} $ in magnitude. In other words, $ H_k(\cdot) $ acts as a multiplication operator characterized by $ q^{(n + 1)} $. This leads to the following algorithm.
	
	\begin{algorithm}[htbp]
		\textbf{Input:} desired sparsity $ k $, mixing matrix $ A \in \bC^{m \times N} $ with $ \ell^2 $-normalized columns, observation vector $ y \in \bC^N $, initial signal $ x^{(0)} \in \bC^N $, step size $ \eta > 0 $, number of iterations $\bar{n}$.\\

		\vspace{-4mm}
		\begin{tcolorbox}[boxsep=3pt,
			left=1pt,
			right=1pt, colback=white, colframe=black, boxrule=0.5mm, opacityframe=0.3]
			Let $ Q^{(0)} = [\ ] $.\\
			For $ n = 0, \dots, \bar{n} - 1 $ repeat:
			\begin{equation*}
				\begin{cases}
					u^{(n + 1)} = x^{(n)} + \eta A^*(y - Ax^{(n)}) \\
					Q^{(n + 1)} = P_{\argsort(v^{(n + 1)})},\quad v^{(n + 1)} = \left|u^{(n + 1)}\right|\\
					q^{(n + 1)} = \sum_{i = 1}^kQ^{(n + 1)}(i, :) \\
					x^{(n + 1)} = q^{(n + 1)} \odot u^{(n + 1)}.
				\end{cases} \tag{pIHT}
			\end{equation*}
		\end{tcolorbox}
		\textbf{Output:} $ k $-sparse vector $ \hat{x} = x^{(\bar{n})} \in \bC^N $.
		\caption{Projection-based IHT (pIHT).}
		\label{alg:pIHT}
	\end{algorithm}
	
	Similar to OMP, achieving a neural network-compatible implementation of IHT requires replacing the permutation matrix $ Q^{(n + 1)} $ in \Cref{alg:pIHT} with an approximate permutation matrix computed via softsort. In doing so, the rows of the softsort matrix serve as approximate canonical vectors, with the first $ k $ vectors spanning an anisotropic $ N $-dimensional subspace of $ \bR^N $, stretched in the directions $ \rho(1), \dots, \rho(k) $ (assuming $ \tau $ is not too large), thanks to \hyperref[props:argmax]{\Cref{prop:properties}\ref{props:argmax}}. The resulting differentiable version of IHT, which we refer to as \emph{Soft-IHT}, is presented below.	
	
	\begin{algorithm}[htbp]
		\textbf{Input:} desired sparsity $ k $, mixing matrix $ A \in \bC^{m \times N} $ with $ \ell^2 $-normalized columns, observation vector $ y \in \bC^N $, initial signal $ \tilde{x}^{(0)} \in \bC^N $, step size $ \eta > 0 $, number of iterations $\bar{n}$.\\
		\vspace{-4mm}
		\begin{tcolorbox}[boxsep=3pt,
			left=1pt,
			right=1pt, colback=white, colframe=black, boxrule=0.5mm, opacityframe=0.3]
			Let $ \tilde{Q}^{(0)} = [\ ] $.\\
			For $ n = 0, \dots, \bar{n} - 1 $ repeat:
			\begin{equation*}
				\begin{cases}
					\tilde{u}^{(n + 1)} = \tilde{x}^{(n)} + \eta A^*(y - A\tilde{x}^{(n)}), \\
					\tilde{Q}^{(n + 1)} = \text{softsort}_\tau(\tilde{v}^{(n + 1)}),\quad \tilde{v}^{(n + 1)} = \left|u^{(n + 1)}\right|\\
					\tilde{q}^{(n + 1)} = \sum_{i = 1}^k\tilde{Q}^{(n + 1)}(i, :), \\
					\tilde{x}^{(n + 1)} = \tilde{q}^{(n + 1)} \odot \tilde{u}^{(n + 1)},
				\end{cases} \tag{Soft-IHT}
			\end{equation*}
		\end{tcolorbox}
		\textbf{Output:} Approximately $ k $-sparse vector $ \hat{x} = \tilde{x}^{(\bar{n})} \in \bC^N $.
		\caption{Soft-IHT.}
		\label{alg:Soft-IHT}
	\end{algorithm}
	
	Just like $ q^{(n + 1)} $, $ \tilde{q}^{(n + 1)} $ also represents a multiplication operator that acts as a mask whose dominant entries correspond to top values of the signal. The performance of Soft-IHT relative to IHT depends on how well this mask approximates the true mask. Similar to the OMP case, the following theorem quantifies this approximation in terms of a condition on $ \tau $, the temperature parameter of softsort.
	\begin{theorem}[Soft-IHT is a good approximation for IHT]
		\label{thm:iht_main}
		Let $ x^{(n)} \in \bC^{N} $, $v^{(n)} \in \bR^{N}$ and $ Q^{(n)} \in \bR^{N \times N} $ be sequences generated by \Cref{alg:pIHT}, and $ \tilde{x}^{(n)} \in \bC^{N} $, $\tilde{v}^{(n)} \in \bR^{N}$ and $ \tilde{Q}^{(n)} \in \bR^{N \times N} $ be sequences generated by \Cref{alg:Soft-IHT}. Further, assume that $ x^{(0)} = \tilde{x}^{(0)} $. Then, the following hold:
		\begin{enumerate}[leftmargin=1.2cm, label=(\roman*)]
			\item Asymptotic convergence: As $ \tau \to 0 $, we have $ \tilde{Q}^{(n)} \to Q^{(n)} $ and $ \tilde{x}^{(n)} \to x^{(n)} $.
			\item Non-asymptotic convergence: Choose
			\[0 < \epsilon < g^{(1:n)}/2L,\]
			with $ L = \|I - A^*A\| $ and $g^{(1:n)}$ defined as 
				\begin{equation}\label{eq:iht_gap}
					g^{(1:n)} := \min_{k \in [n]}g^{(k)},\quad g^{(k)} := \min_{i, j \in [N], i \neq j}\left|v_i^{(k)} - v_j^{(k)}\right|,
				\end{equation}
				representing IHT's ``global min-gap" and ``local min-gap", respectively.
			If $ \tau $, the temperature parameter of $\text{softsort}$, satisfies the condition
			\[\tau \leq \left(g^{(1:n)} - 2L\epsilon\right)/\log(C^{(n)}/\epsilon),\]
			where
			\[C^{(n)} = 2sN\frac{(sL)^n - 1}{sL - 1}\left(\|y\| + s\mu \max_{1 \leq k \leq n - 1}\|x^{(k)}\|\right),\]
			and $ \mu(A) $ is the coherence parameter of the matrix $ A $, then
			\[\max_{i \in [0:n]}\left\|x^{(i)} - \tilde{x}^{(i)}\right\| \leq \epsilon.\]
		\end{enumerate}
	\end{theorem}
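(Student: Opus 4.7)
The strategy mirrors the proof of \Cref{thm:omp_main}: rewrite IHT so that each iterate depends on the sorting permutation through an explicit matrix (the projection form of \Cref{alg:pIHT}), then replace that permutation with its softsort proxy and control the resulting perturbation by induction on $n$. The key observation is that pIHT and Soft-IHT differ only through $Q^{(n+1)}$ versus $\tilde{Q}^{(n+1)}$, whose first $k$ rows sum to masks $q^{(n+1)}$ and $\tilde{q}^{(n+1)}$. Consequently, the iterate error decomposes additively into a contraction part inherited from $\|u^{(n+1)}-\tilde{u}^{(n+1)}\|$ (the gradient-descent step, with Lipschitz constant $L = \|I - A^*A\|$) and a mask-mismatch part driven by $\|q^{(n+1)}-\tilde{q}^{(n+1)}\|$.

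For part (i), I induct on $n$ with the base case $x^{(0)} = \tilde{x}^{(0)}$. Assuming $\tilde{x}^{(n)} \to x^{(n)}$ as $\tau \to 0$, continuity of the linear operations and of $|\cdot|$ yields $\tilde{u}^{(n+1)} \to u^{(n+1)}$ and $\tilde{v}^{(n+1)} \to v^{(n+1)}$. Since, by the no-ties assumption, $v^{(n+1)}$ has pairwise distinct entries, for $\tau$ small enough $\tilde{v}^{(n+1)}$ shares the same $\argsort$, and \hyperref[props:asym]{\Cref{prop:properties}\ref{props:asym}} then gives $\tilde{Q}^{(n+1)} \to Q^{(n+1)}$. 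Summing the top $k$ rows yields $\tilde{q}^{(n+1)} \to q^{(n+1)}$, and the Hadamard identity $\tilde{x}^{(n+1)} = \tilde{q}^{(n+1)} \odot \tilde{u}^{(n+1)}$ closes the induction.

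For part (ii), I again induct, assuming $\|x^{(i)}-\tilde{x}^{(i)}\|\leq \epsilon$ for all $i \leq n$. From $u^{(n+1)} - \tilde{u}^{(n+1)} = (I - A^*A)(x^{(n)} - \tilde{x}^{(n)})$ I obtain $\|u^{(n+1)} - \tilde{u}^{(n+1)}\| \leq L\epsilon$, hence $\|\tilde{v}^{(n+1)} - v^{(n+1)}\|_\infty \leq L\epsilon$. The reverse triangle inequality then shows that every pairwise gap of $\tilde{v}^{(n+1)}$ exceeds $g^{(n+1)} - 2L\epsilon > 0$, so the full ordering, and in particular the top-$k$ support, is preserved. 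Using the explicit softmax form of softsort together with this perturbed-gap lower bound, I bound $\|Q^{(n+1)} - \tilde{Q}^{(n+1)}\|_F$ by a term of the form $(\text{const})\cdot\exp\bigl(-(g^{(n+1)} - 2L\epsilon)/\tau\bigr)$, which transfers to $\|q^{(n+1)} - \tilde{q}^{(n+1)}\|$ up to a factor at most $\sqrt{k}$. Writing
\[x^{(n+1)} - \tilde{x}^{(n+1)} = q^{(n+1)} \odot (u^{(n+1)} - \tilde{u}^{(n+1)}) + (q^{(n+1)} - \tilde{q}^{(n+1)}) \odot \tilde{u}^{(n+1)},\]
the first term is bounded by $L\epsilon$ (since $\|q^{(n+1)}\|_\infty = 1$), while the second uses the pointwise identity $\tilde{u}_j^{(n+1)} = \langle a_j, y\rangle - \sum_{l \neq j} \langle a_j, a_l\rangle \tilde{x}_l^{(n)}$ to give $\|\tilde{u}^{(n+1)}\|_\infty \leq \|y\| + s\mu(A)\|\tilde{x}^{(n)}\|$ via a coherence--sparsity estimate. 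Unrolling the resulting recursion $\epsilon_{n+1} \lesssim L\epsilon_n + (\text{softsort error})\cdot(\|y\| + s\mu \max_{k}\|\tilde{x}^{(k)}\|)$ produces the geometric factor $((sL)^n - 1)/(sL - 1)$ in $C^{(n)}$, and the hypothesis on $\tau$ is exactly the threshold that forces each $\epsilon_{n+1} \leq \epsilon$.

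The principal technical obstacle is the third step: turning the pointwise softmax decay into a uniform norm bound matching $C^{(n)}$ requires an IHT analogue of \Cref{lemma:frob_bounds}, combining \Cref{prop:continuity} with the perturbed-gap $g^{(n+1)} - 2L\epsilon$ to cleanly extract the factor $2sN$. A secondary difficulty is the bookkeeping when swapping $x^{(n)}$ for $\tilde{x}^{(n)}$ in the $\|\tilde{u}^{(n+1)}\|_\infty$ estimate: since $\tilde{x}^{(n)}$ is only approximately $k$-sparse, one must either absorb the discrepancy into $\epsilon$ via $\|\tilde{x}^{(n)}\| \leq \|x^{(n)}\| + \epsilon$, or work directly in $\ell^1$, in order to recover the $\max_{1 \leq k \leq n-1}\|x^{(k)}\|$ factor appearing in the statement.
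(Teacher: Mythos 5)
Your outline follows the same route as the paper: a one-step perturbation lemma (gap preservation via the reverse triangle inequality, a softmax-decay bound on the mask mismatch, a coherence bound on $\|u\|_\infty$, and a Lipschitz bound on the gradient step), unrolled by induction into a geometric sum. The one substantive divergence is your choice of decomposition, and it is precisely what creates the difficulty you flag at the end. You write $x^{(n+1)} - \tilde{x}^{(n+1)} = q \odot (u - \tilde{u}) + (q - \tilde{q}) \odot \tilde{u}$, which forces you to estimate $\|\tilde{u}\|_\infty$ via coherence applied to $\tilde{x}^{(n)}$. But the bound $\|(I - A^*A)z\|_\infty \le s\mu\|z\|_\infty$ genuinely requires $z$ to be $s$-sparse (the sum $\sum_{j \in \supp(z),\, j \neq i}|\langle a_i, a_j\rangle|$ must run over at most $s$ terms), and $\tilde{x}^{(n)} = \tilde{q}^{(n)} \odot \tilde{u}^{(n)}$ is fully dense since every entry of the softsort mask is positive. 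Neither of your proposed fixes closes this: absorbing $\|\tilde{x}^{(n)}\| \le \|x^{(n)}\| + \epsilon$ controls the norm but not the support size, and the $\ell^1$ route changes the constant. The paper avoids the issue entirely by using the symmetric split $(q - \tilde{q}) \odot u + \tilde{q} \odot (u - \tilde{u})$: the coherence estimate then lands on $u = u(x^{(n)})$ with $x^{(n)}$ exactly $s$-sparse, and the second term is handled by H\"older as $\|\tilde{q}\|_1 \|u - \tilde{u}\|_\infty = s \cdot L\|x^{(n)} - \tilde{x}^{(n)}\|$, which is also where the $sL$ contraction factor (hence the geometric ratio in $C^{(n)}$) comes from. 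Relatedly, your first-term bound $\|q \odot (u - \tilde{u})\| \le L\epsilon$ via $\|q\|_\infty = 1$ would give a recursion with ratio $L$, not $sL$, so the factor $((sL)^n - 1)/(sL - 1)$ you claim would not actually emerge from your bookkeeping (though a tighter constant would still suffice for the stated conclusion). Everything else --- part (i), the gap argument, the $2sN e^{-\tilde{g}/\tau}$ mask bound via permutation equivariance, and the final inversion for $\tau$ --- matches the paper's argument.
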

	\begin{proof}[Proof (sketch)]
		The proof of this theorem is derived through a perturbation analysis of a single IHT iteration, ensuring that a single iteration of Soft-IHT follows IHT in maximal indices passed through the next iteration and approximation error (\Cref{lemma:iht_single_iteration}). This result is then employed in an induction to extend the result to all previous iterations down to the first iteration. For the full proof, see \Cref{appendix:iht}.
	\end{proof}
	Similar to the conclusions drawn in the OMP case, achieving smaller gaps (inherent to the model) and a higher approximation accuracy requires smaller values of $ \tau $. Moreover, \Cref{remark:gaps,remark:recovery_gurantee} remain valid in the IHT case as well.
	
	\section{Greedy networks}
	\label{sec:greedy_networks}
	\Cref{alg:Soft-OMP,alg:Soft-IHT} yield good approximations of \Cref{alg:OMP,alg:IHT}, respectively, achieving an accuracy level prescribed by \Cref{thm:omp_main,thm:iht_main} and regulated by $ \tau $. However, this approximation comes with some accuracy loss due to replacing the exact argsort operator in \Cref{eq:sorting} with the approximate softsort operator in \Cref{eq:softsort}. This approximation error can be tolerated as it facilitates the implementation of Soft-$\ast$ algorithms within neural networks. 
	
	In this section, we introduce these neural networks that we call \emph{OMP-Net} and \emph{IHT-Net}, placing them under the umbrella of \emph{greedy networks}, and describe their training procedure. In \Cref{sec:numerics}, we will provide numerical evidence that these networks can not only compensate for the introduced approximation error through training, but even outperform the original algorithms.
	
	\paragraph{OMP-Net and IHT-Net}
	Our goal is to approximate the reconstruction map associated with problem \eqref{eq:SR_model}, i.e., $ \Delta: \bC^m \times \bC^{m \times N} \ni (y, A) \mapsto x = \Delta(y, A) \in \bC^N $ (or simply $\Delta(y)$ as $A$ is fixed throughout), using a neural network built upon Soft-OMP and Soft-IHT. Although a variety of choices exist for trainable parameters (see \Cref{sec:conclusion}), we specifically take the weights as trainable parameters, embedding them into the greedy criterion of Soft-OMP and the hard-thresholding operator of Soft-IHT, as explained in \Cref{sec:preliminaries}. This aims to uncover the latent structure within the signal and naturally sets forth an application of our construction in the context of weighted sparse recovery. Thus, constructing $L$ layers of Soft-$\ast$ algorithms leads to trainable parameters $\Theta = (w^{(l)})_{l = 1}^L$. Given the dataset $\mathcal{D} = \mathcal{S}\sqcup\mathcal{S}'$, where the training data $ \mathcal{S} = \{(x^{(i)}, y^{(i)})\}_{i = 1}^{N_{\text{tr}}} $ consists of $N_\text{tr}$ instances of input-output pairs for training, and correspondingly the validation data $ \mathcal{S}' = \{(x^{(i)}, y^{(i)})\}_{i = 1}^{N_{\text{val}}} $, we train the neural network $ y \in \bC^m \mapsto \mathcal{NN}_{\mathcal{S}, \Theta}(y) \in \bC^N $. The network $\mathcal{NN}_{\mathcal{S}, \Theta}(\cdot)$ uses the training data $\mathcal{S}$ to learn a data-driven approximation of the mapping $\Delta$ via gradient-based optimization. The parameters are iteratively updated over $N_\text{epoch}$ epochs, generating the sequence $\Theta^{(j)},\ j = 0, \dots, N_{\text{epoch}}$, where $\Theta^{(0)}$ represents the network's initial state. The optimization minimizes the \emph{squared-loss}
	\[\text{Loss}_{sq}(x, x') = \|x - x'\|^2 \in [0, +\infty),\]
	yielding the training error
	\[\text{MSE}_{\text{tr}}^{(j)}(\mathcal{S}) = \frac{1}{N_{\text{tr}}}\sum_{i = 1}^{N_{\text{tr}}}\text{Loss}_{sq}(\mathcal{NN}_{\mathcal{S}, \Theta^{(j)}}(y^{(i)}), x^{(i)}),\ j = 0, \dots, N_{\text{epoch}},\]
	computed over the training data $\mathcal{S}$, with a similar validation error $\text{MSE}_{\text{val}}^{(j)}(\mathcal{S}')$ for the validation data $\mathcal{S}'$.
	
	\section{Numerical experiments}
	\label{sec:numerics}
	We now present numerical experiments to validate our construction. The source code of our numerical experiments can be found on the GitHub repository \url{https://github.com/sina-taheri/Deep_Greedy_Unfolding}. We first demonstrate that Soft-OMP and Soft-IHT serve as viable alternatives to OMP and IHT, capable of approximating the original algorithms with the desired accuracy depending on $\tau$, the temperature parameter of softsort. We then illustrate their effectiveness in the scenario of severe undersampling, i.e., when $m$ is so small that conventional compressed sensing struggles to reach a good recovery accuracy. By training neural networks based on these algorithms, we surpass the performance of conventional compressed sensing. Since all our experiments rely on a random compressed sensing setup with Gaussian or Fourier measurements, we first introduce this setup.
	\paragraph{Random compressed sensing}
	We generate an $s$-sparse signal $x \in \bR^N$ by selecting $s \ll N$ random indices $i \in S \subset [N]$ without replacement from a discrete uniform distribution and assigning to each nonzero entry $x_i$ a value from a standard random Gaussian distribution, i.e., $x_i \sim \mathcal{N}(0, 1)$. The signal $x$ is then passed through the measurement matrix $A = A'/\sqrt{m} \in \mathbb{F}^{m \times N}$, where $A'$ is either (1) a real-valued Gaussian matrix ($\mathbb{F} = \bR$) with elements identically and independently distributed according to the $\mathcal{N}(0, 1)$ distribution, or (2) a partial Fourier matrix ($\mathbb{F} = \bC$) given by $A' = PF$, where $P \in \bR^{m \times N}$ is a row-selector matrix and $F \in \bC^{m \times N}$ is the Fourier matrix with elements $F_{kl} = \exp(-i2\pi kl/N)$ for $k = -N/2 + 1, \dots, N/2$ and $l = 0, \dots, N - 1$. Our goal is to recover $x$ from the measurements \eqref{eq:SR_model}, where $e$ is a random Gaussian noise vector with $e_i \sim \mathcal{N}(0, \sigma^2/m)$, real or complex-valued in the case of Gaussian or Fourier matrix respectively. We achieve this by running $l$ iterations of OMP and IHT, or their differentiable counterparts, tackling the sparse recovery problem \eqref{eq:SR_problem}. This yields $x^{(l)} = \mathcal{A}(y, l)$, where $\mathcal{A}$ corresponds to OMP or IHT (pOMP or pIHT), or $\tilde{x}^{(l)}(\tau) = \tilde{\mathcal{A}}(y, l,\tau)$ for Soft-OMP or Soft-IHT.
	\subsection{Experiment I: Validation of \Cref{thm:omp_main,thm:iht_main}}\label{subsec:experiment_i}
	We numerically verify that Soft-OMP and Soft-IHT serve as accurate approximations of OMP and IHT, experimentally validating \Cref{thm:omp_main,thm:iht_main}. Our compressed sensing setup employs Gaussian measurements with the following parameters:
	\[N = 400,\quad m = 200,\quad s = 15,\quad \sigma = 10^{-3},\quad \eta = 0.6 \; \text{(for IHT)}.\]
	We reconstruct the signals $x^{(n)}$ and $\tilde{x}^{(n)}(\tau)$ while varying $\tau$ and compute the relative $\ell^2$-error 
	\[E^{(n)}(\tau) = \frac{\|x^{(n)} - \tilde{x}^{(n)}(\tau)\|}{\|x^{(n)}\|}.\]
	This error is plotted as a function of $\tau$ for different iteration counts: $n \in \{5, 15, 30\}$ for \mbox{(Soft-)OMP} and $n \in \{1, 15, 45\}$ for (Soft-)IHT, respectively. The procedure is repeated $N_\text{trial}$ times, and the results are visualized as shaded plots in \Cref{fig:shaded_plot}. The solid curves $(\tau, 10^{\mu_\tau^{(n)}})$ are bounded by $(\tau, 10^{\mu_\tau^{(n)} + \rho_\tau^{(n)}})$ and $(\tau, 10^{\mu_\tau^{(n)} - \rho_\tau^{(n)}})$ where the logarithmic mean $\mu_\tau^{(n)}$ and standard deviation $\rho_\tau^{(n)}$ are given by
	\[\mu_\tau^{(n)} = \frac{1}{N_\text{trial}}\sum_{i = 1}^{N_\text{trial}}\left(\log(E^{(n)}(\tau))\right)_i\text{and}\ \rho_\tau^{(n)} = \sqrt{\frac{1}{N_\text{trial} - 1}\sum_{i = 1}^{N_\text{trial}}\left[\left(\log(E^{(n)}(\tau))\right)_i - \mu_\tau^{(n)}\right]^2}.\]
	To prevent numerical issues with logarithms, we compute $E^{(n)}(\tau) + \epsilon$ in practice, where $\epsilon=10^{-12}$.
	\begin{figure}[t!]
		\includegraphics[width = 0.49\textwidth]{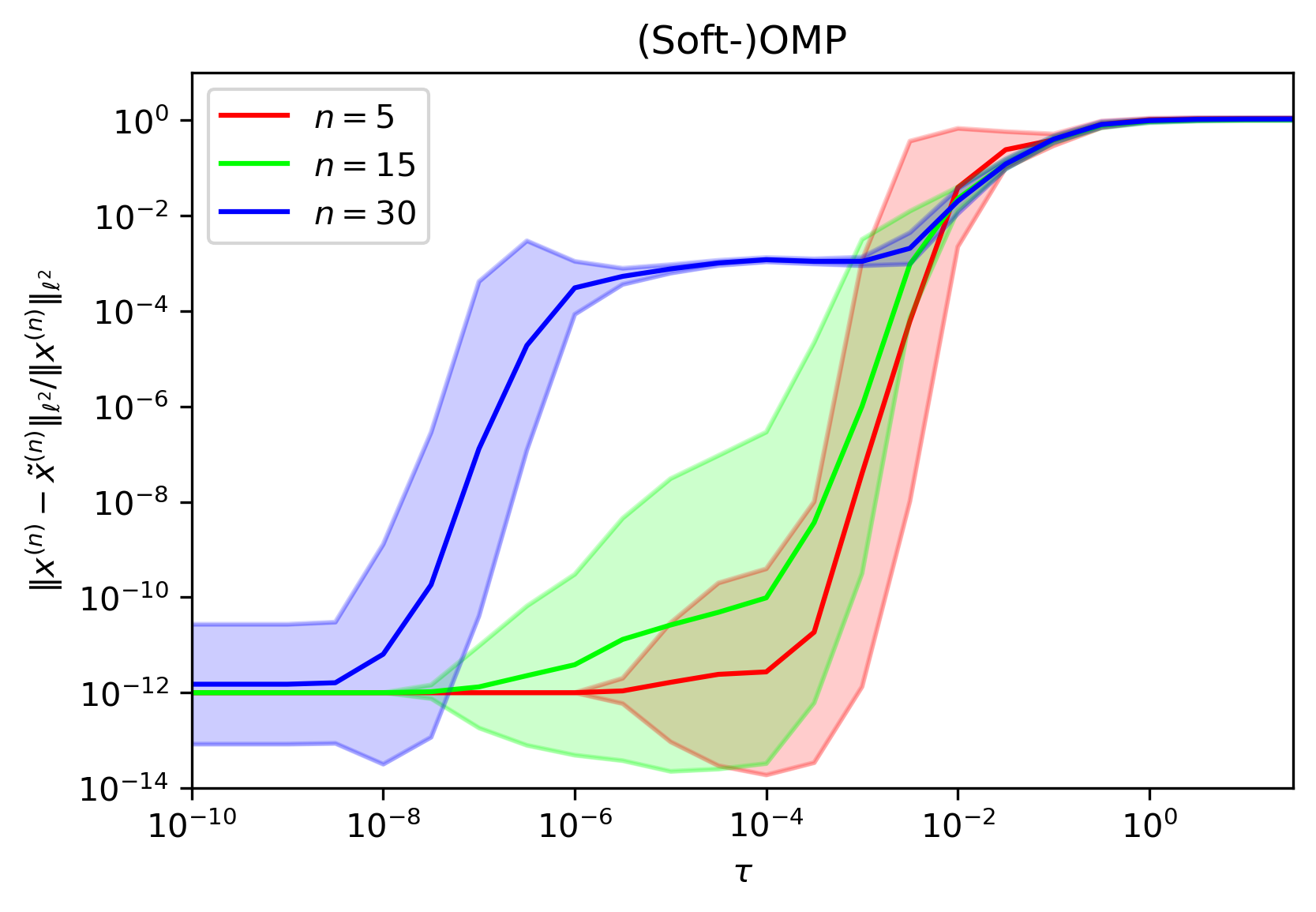}%
		\includegraphics[width = 0.49\textwidth]{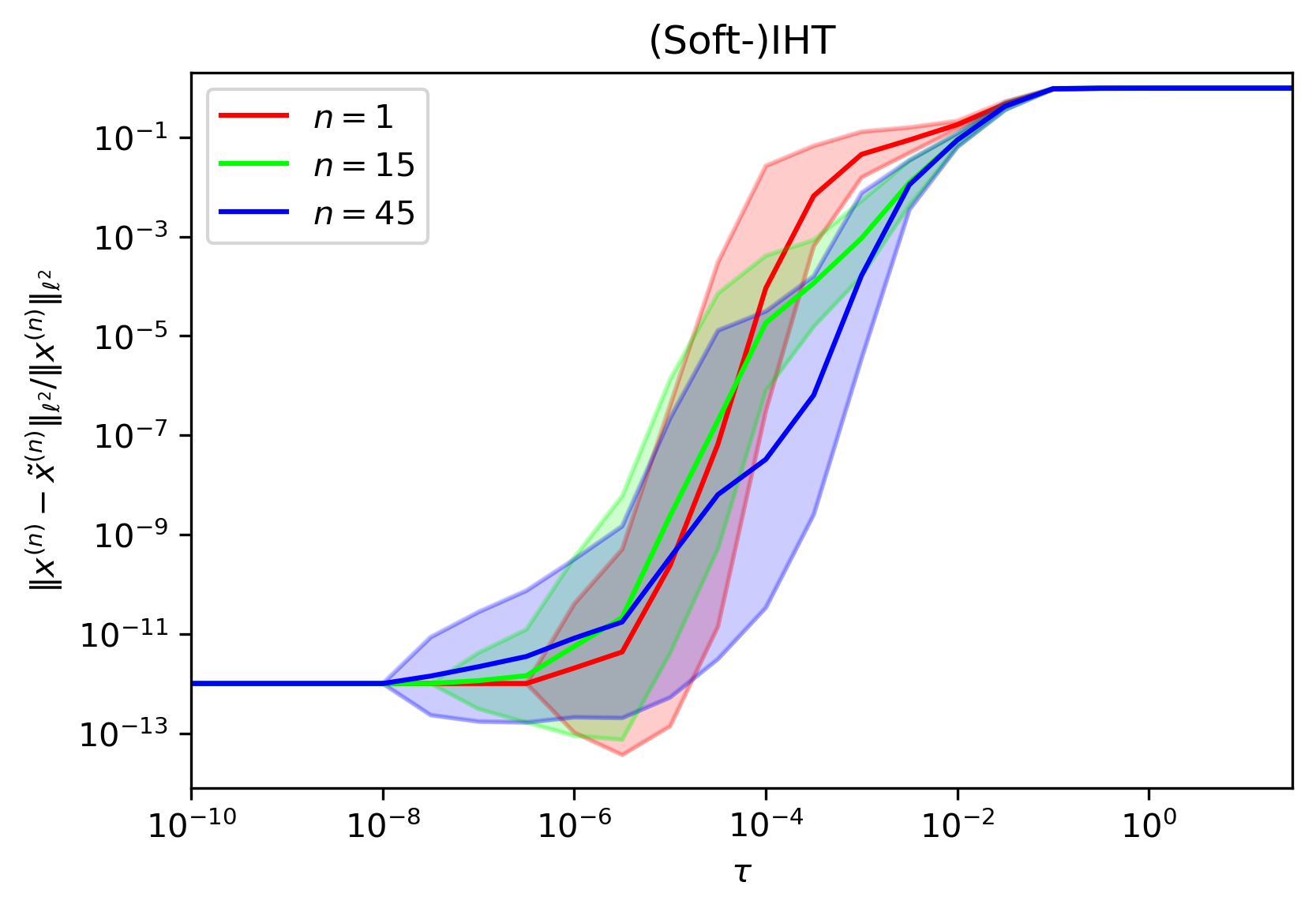}
		\caption{Relative $\ell^2$-error as a function of $\tau$ (see \Cref{subsec:experiment_i}). Recovery accuracy of (Soft-)OMP on the left and (Soft-)IHT on the right for various iteration counts.}
		\label{fig:shaded_plot}
	\end{figure}
	\Cref{fig:shaded_plot} confirms that for sufficiently small $\tau$, Soft-OMP and Soft-IHT converge to their original counterparts. As expected, achieving this convergence for larger iteration counts requires smaller values of $\tau$, consistent with \Cref{thm:omp_main,thm:iht_main}. However, the precise rate of convergence varies across experiments, as the optimal $\tau$ depends on the problem-specific global min-gap, which tend to diminish with further iterations.
	
	\subsection{Experiment II: Weighted recovery in the undersampled regime}\label{subsec:experiment_ii}
	We now turn to the core motivation behind Soft-OMP and Soft-IHT: their unfolding into neural networks, corresponding to the trainable architectures OMP-Net and IHT-Net, respectively. In experiments of this section for IHT, the step size $\eta$ is fixed (see, e.g., \cite{adcock2022sparse}). We construct a dataset using a sparse random model with a partial Fourier matrix $A \in \bC^{m \times N}$, deliberately setting $m$ low enough that standard OMP and IHT fail to reconstruct the signal efficiently. To exploit the learning capabilities of these networks, we generate signals whose support $S$ lies within a larger set $T \supset S$ of size $|T| = ks,\ k \in \mathbb{N}$, uniformly distributed over $[N]$. This motivates the definition of oracle weights $w^{\text{oracle}} \in \bR^N$
	\[w_j^{\text{oracle}} = \begin{cases}
		1 & j \in T,\\
		0 & \text{otherwise}.
	\end{cases}\]
	OMP-Net and IHT-Net do not have access to the oracle information. Instead, they are initialized with uniform weights $w^\text{NN} = \mathbbm{1}$, and are expected to infer the latent structure during training.
	
	We generate the dataset $\mathcal{D}$ with parameters
	\begin{align*}
		&N = 256,\quad s = 10,\quad m = 22\ \text{for\ OMP\ and}\ 36\ \text{for\ IHT},\quad \eta = 0.5\ \text{(for IHT)}\\
		&\sigma = 10^{-3},\quad k = 2,\quad N_\text{train} = 1024,\quad N_\text{val} = 512.
	\end{align*}
	We construct OMP-Net and IHT-Net with $L$ layers and employ weight-sharing across layers, i.e., $w^{(l)} = w^\text{NN} \in \bR^N$ for all $l = 1, \dots, L$. We train them using RMSprop for $N_\text{epoch}$ epochs with an empirically chosen learning rate $lr$ and gradient norm clipping with maximum gradient norm set to 1. We further employ checkpointing to improve training: every $N_\text{checkpt}$ epochs, the network parameters are saved, and the final model corresponds to the checkpoint with the best performance on the training data $\mathcal{S}$. The network parameters are:
	\[L = 10\ \text{for\ OMP\ and}\ 30\ \text{for\ IHT},\quad N_\text{epoch} = 1000,\quad N_\text{checkpt} = 10,\quad lr = 10^{-2},\quad \tau = 10^{-3}.\]
	As mentioned earlier, careful selection of $\tau$ is essential. While \Cref{thm:omp_main,thm:iht_main} suggest small values, excessively small $\tau$, can lead to vanishing gradients, while overly large $\tau$ sacrifices accuracy to an extent that would make it very difficult to compensate for by training. Empirically, a good balance is achieved by choosing $\tau$ within the transition phase of \Cref{fig:shaded_plot}.
	
	The results, presented in \Cref{fig:nn_training}, compare OMP and IHT (first and second columns, respectively). The top row shows the evolution of $\text{MSE}_\text{tr}$ and $\text{MSE}_\text{val}$, truncated at the best checkpoint.
	For IHT number of iterations is always equal to $L$ and $\eta = 0.5$. The stem plots in the second and third rows compare oracle and learned weights, illustrating the networks' ability to detect the set $T$ from data. Since mean-squared error is sensitive to outliers, we complement our analysis with boxplots of the relative $\ell^2$-error (fourth row), capturing the probabilistic nature of the experiment and confirming near noise-level signal recovery. Notably, OMP-Net and IHT-Net are able to outperform their classical counterpart by orders of magnitude.
	\begin{figure}[t!]
		\includegraphics[width = 0.49\textwidth]{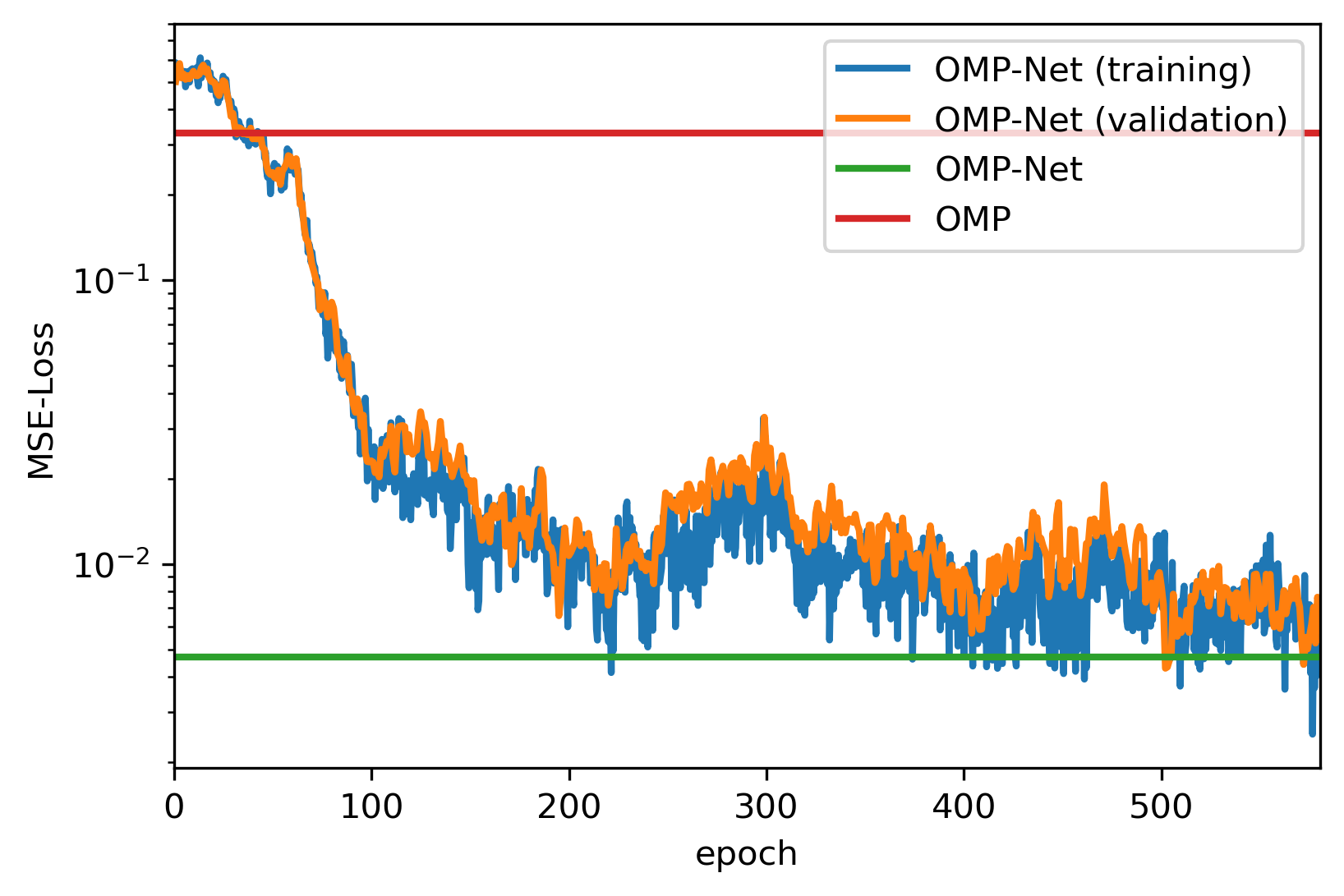}%
		\includegraphics[width = 0.49\textwidth]{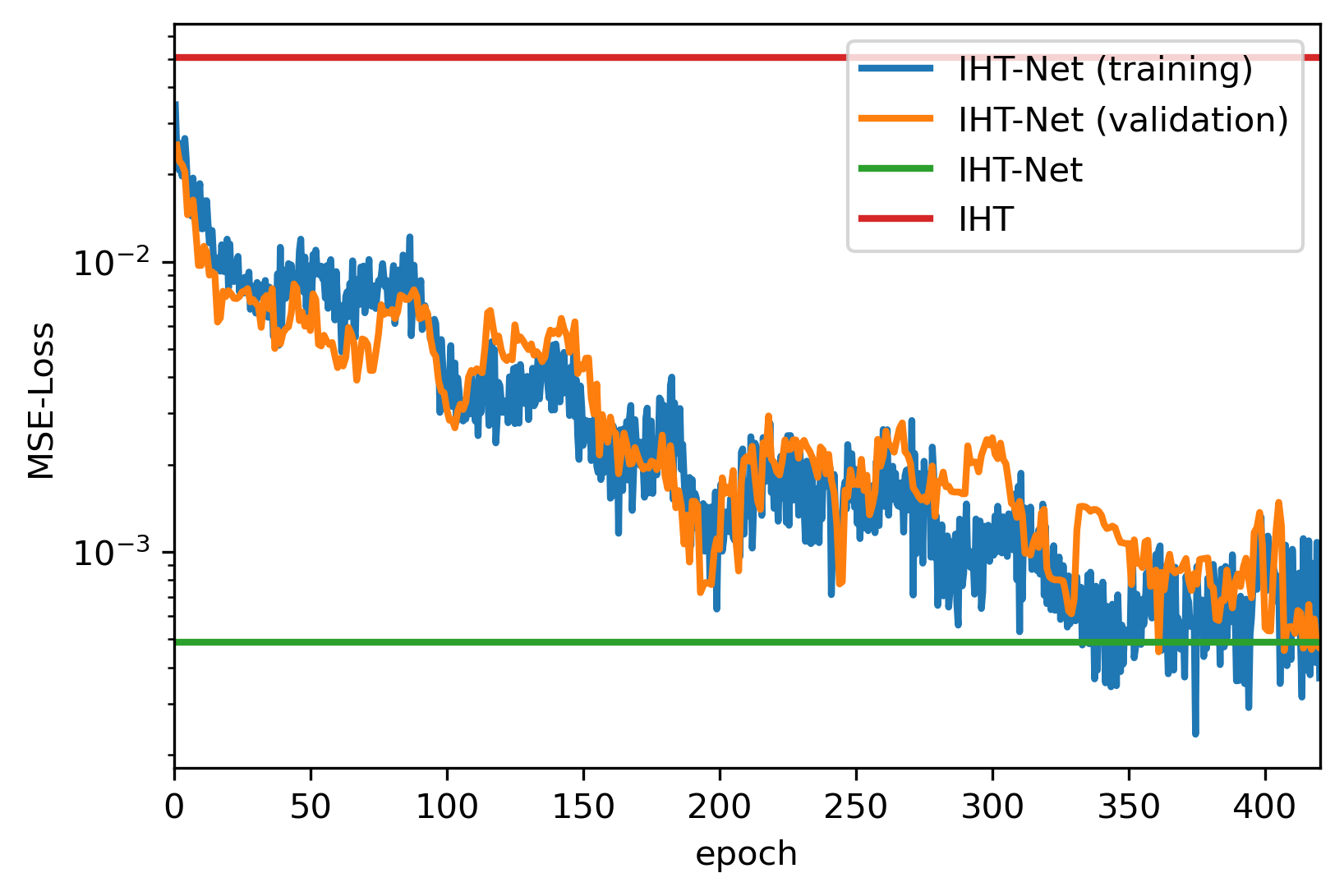}
		
		\includegraphics[width = 0.49\textwidth]{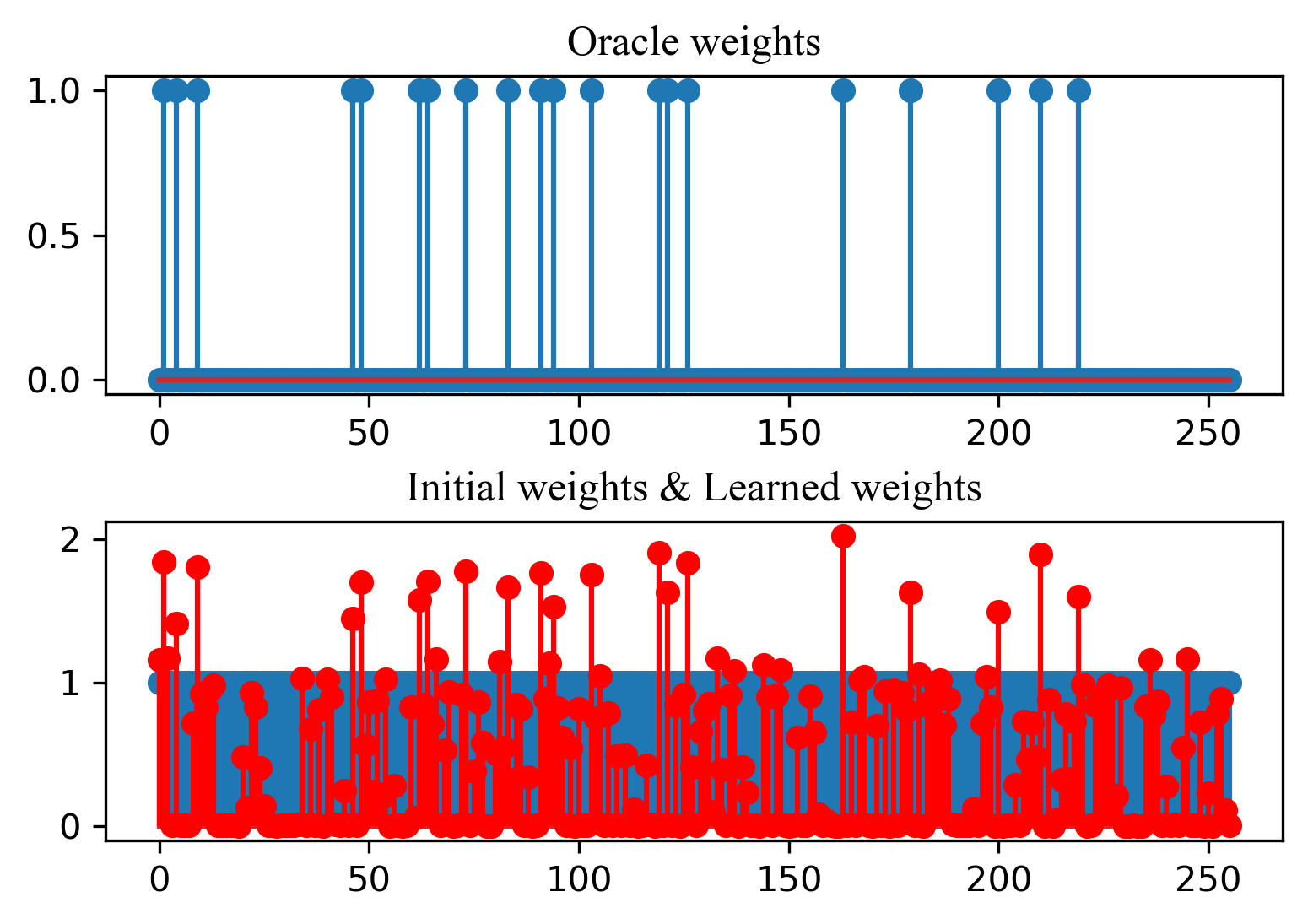}%
		\includegraphics[width = 0.49\textwidth]{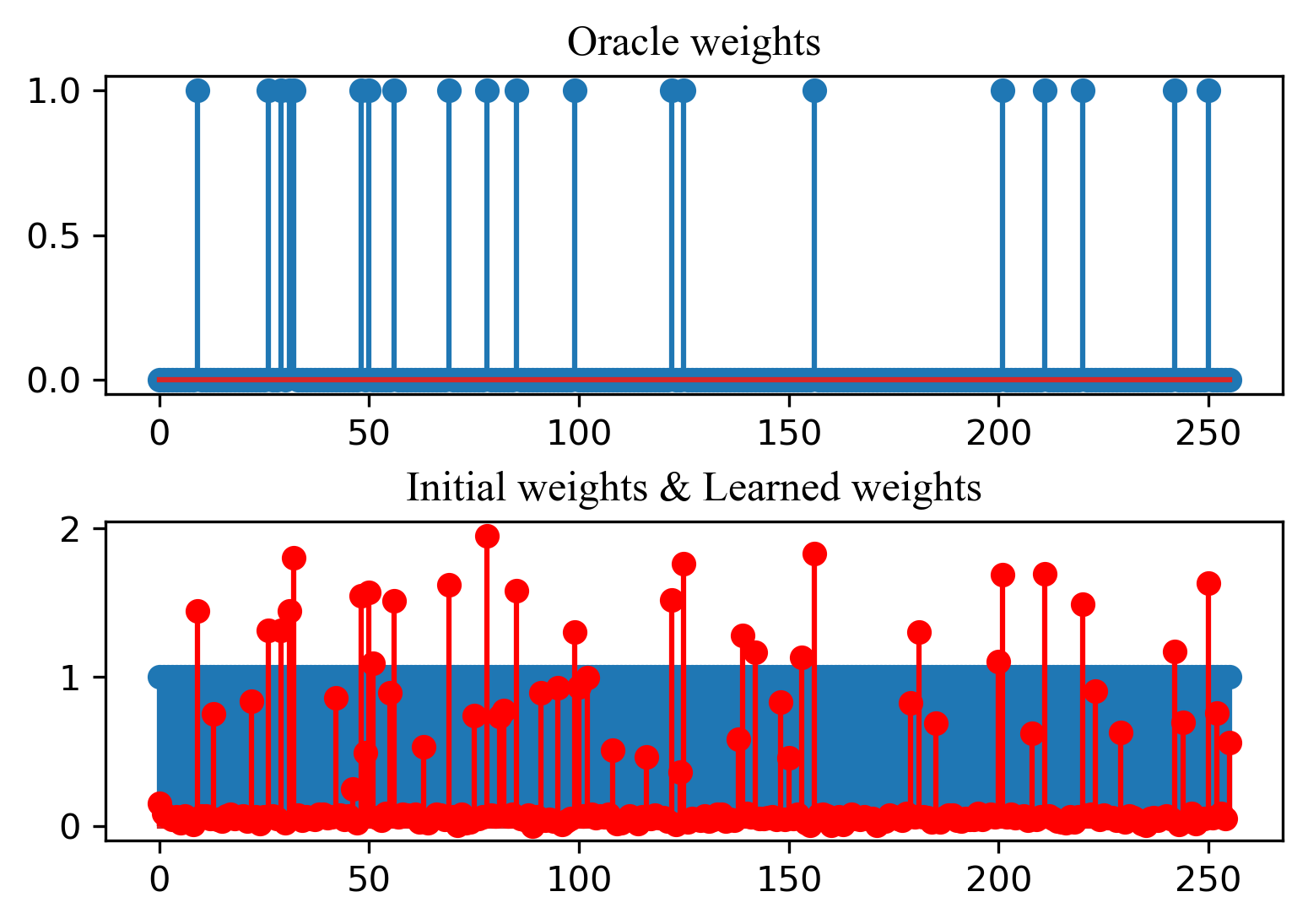}
		
		\includegraphics[width = 0.49\textwidth]{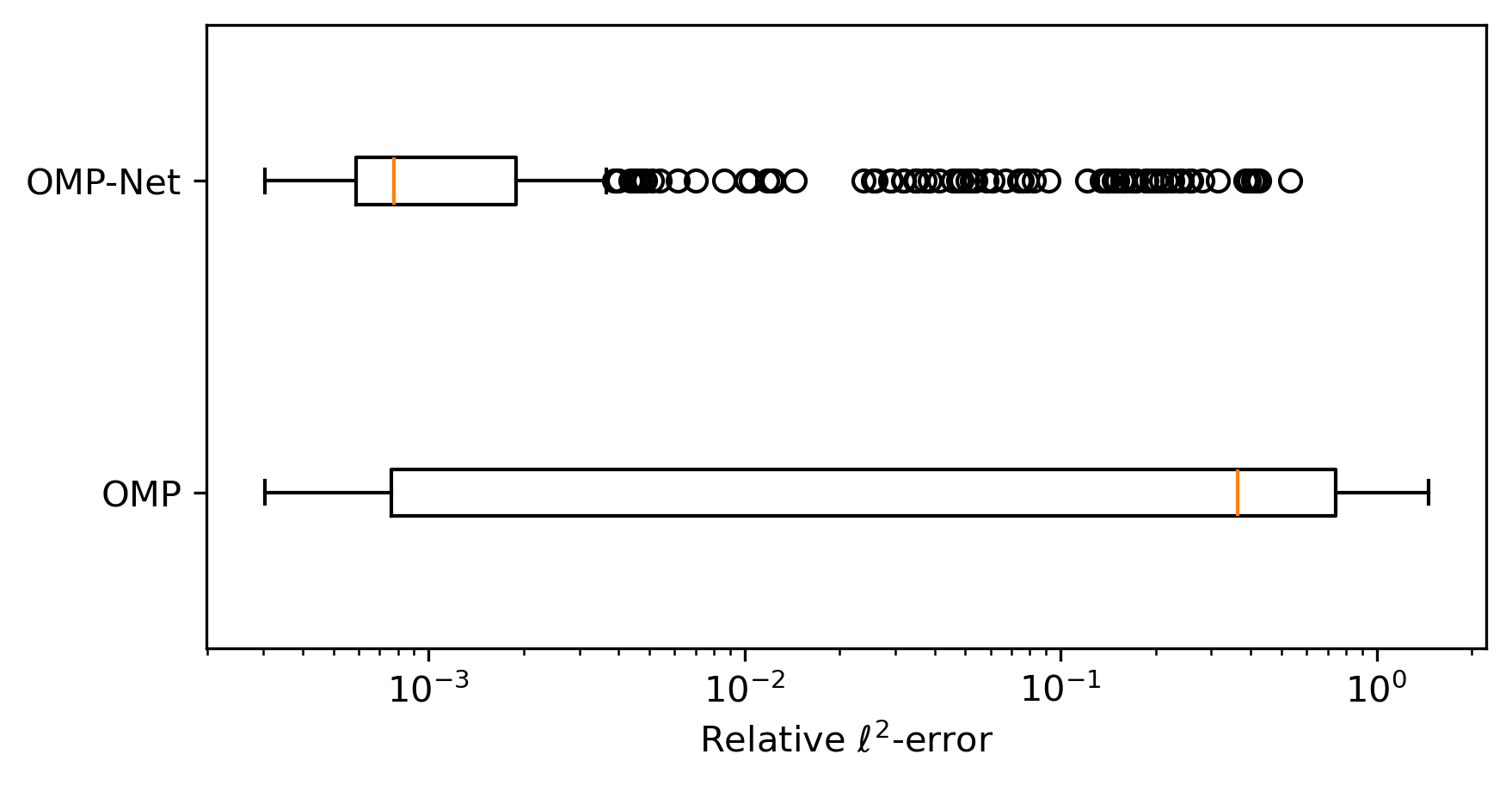}%
		\includegraphics[width = 0.49\textwidth]{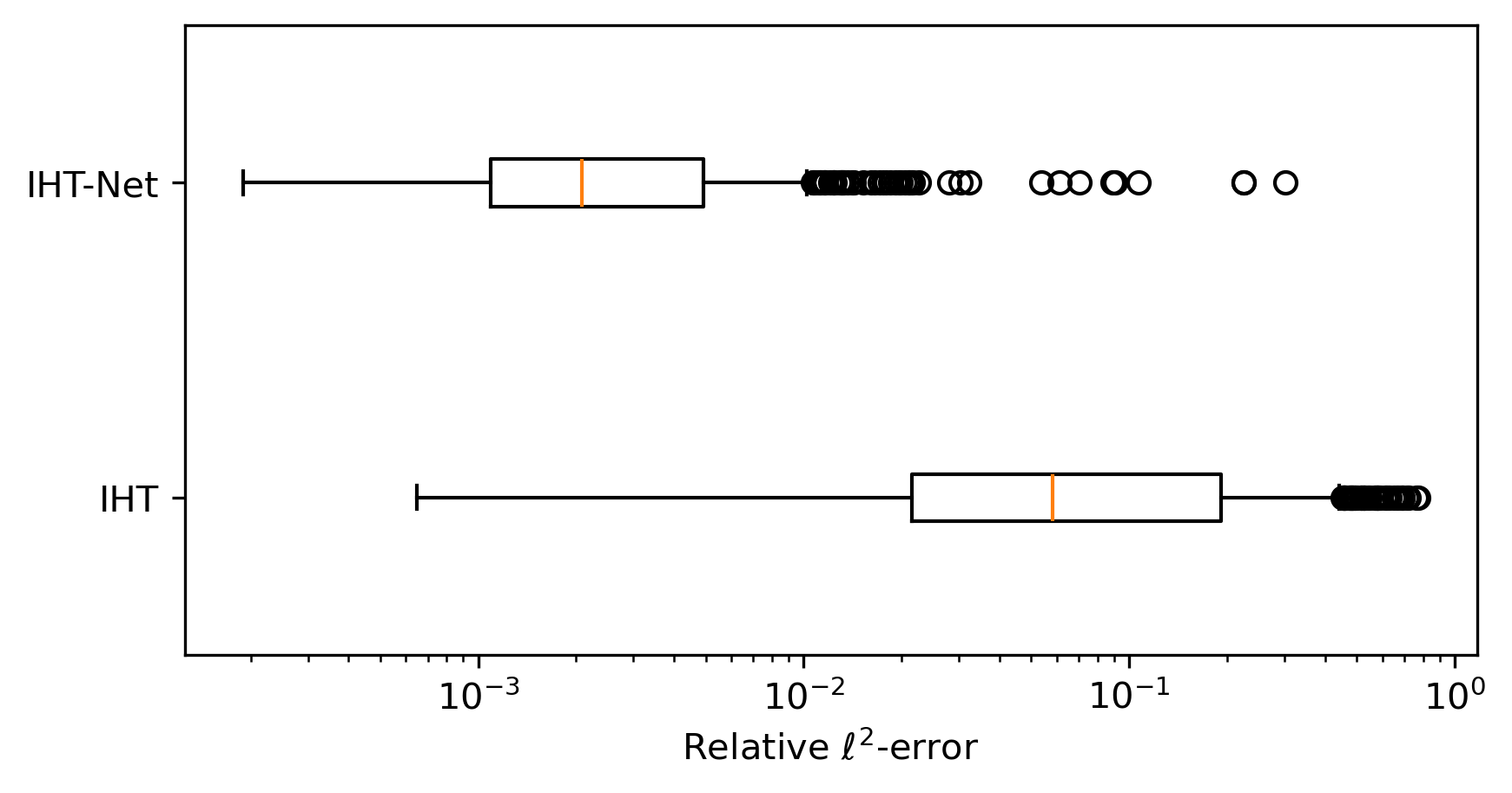}
		\caption{From top to bottom: MSE-Loss, oracle weights, learned weights and relative $\ell^2$-error boxplots for (Soft-)OMP on the left and (Soft-)IHT on the right column (see \Cref{subsec:experiment_ii} for futher details).}
		\label{fig:nn_training}
	\end{figure}
	\section{Conclusions and future research}\label{sec:conclusion}
	In this manuscript, we directly addressed the non-differentiability of the argsort operator in greedy sparse recovery algorithms, such as OMP and IHT, and proposed  differentiable approximations, Soft-OMP and Soft-IHT, which were effectively unrolled into neural networks. We also established rigorous theoretical error bounds for Soft-OMP and Soft-IHT in relation to their original counterparts.
	
	There are several potential avenues for future research.
	First, softsort is not the only approach for implementing the argsort operator in a differentiable manner within neural networks. The development of neural-network compatible implementations of (arg)sort has long been an area of active interest. However, differentiable realizations of such operators have only emerged in the last decade. Notable examples include Neuralsort \cite{grover2019stochastic}, which shares a significant resemblance to softsort, optimal transport-based methods using the Sinkhorn algorithm \cite{cuturi2019differentiable, mena2018learning}, optimization-based approaches \cite{blondel2020fast,sander2023fast}, and the Gumble trick \cite{jang2016categorical}. Among these, we chose softsort primarily due to its simple formulation and interpretability. That said, a comprehensive comparison of greedy networks based on the various differentiable sorting methods available in the literature remains a path to be pursued.
		
	Second, in this work we unrolled OMP and IHT as prominent representatives of greedy algorithms. Other well-known candidates worth mentioning include, but are not limited to, CoSaMP \cite{needell2009cosamp}, subspace pursuit \cite{dai2009subspace} and greedy weighted-LASSO algorithms \cite{mohammad2025greedy}.
		
	Third, in our implementation we use the most natural architecture directly imposed by the model. However, generalizations to other architectures, such as Recurrent Neural Networks (RNNs) to capture the recurrent nature of algorithms or Convolutional Neural Networks (CNNs) to reduce the number of parameters, warrant further investigation. Additionally, several interesting extensions could be explored, including experiments with varying numbers of layers, learning additional parameters (e.g., the step size $\eta$, the full matrix $A^*$, $I - A^*A$, etc.), employing distinct parameters across layers, and realization of stopping criterion.

	\section*{Acknowledgements} SM acknowledges support from MITACS Globalink research award. SB was partially supported by the Natural Sciences and Engineering Research Council of Canada (NSERC) through grant RGPIN-2020-06766 and the Fonds de Recherche du Qu\'ebec Nature et Technologies (FRQNT) through grants 313276 and 359708. MC acknowledges support from INT/UCAM Early Career grant LEAG/929.G101121.

\appendix

\section{Proof of \Cref{prop:continuity}}\label{appendix:continuity}
\begin{proof}
	Continuity of softsort comes from the fact that softsort is a composition of continuous functions. Let $ u^* = \sort(u) $ and $ v^* = \sort(v) $, for any $ u, v \in \bR^N $. Then knowing that $ \softmax $ is a $1$-Lipschitz continuous function \cite{gao2017properties}, for all $ i \in [N] $ we can write
	\begin{align*}
		&\left\|\softsort(u)(i, :) - \softsort(v)(i, :)\right\|\\
		&=\left\|\softmax\left(\frac{-|u^*_i\mathds{1} - u|}{\tau}\right) - \softmax\left(\frac{-|v^*_i\mathds{1} - v|}{\tau}\right)\right\| \leq \frac{1}{\tau}\left\||u^*_i\mathds{1} - u| - |v^*_i\mathds{1} - v|\right\| \\ 
		&\leq \frac{1}{\tau}\left\|(u^*_i - v^*_i)\mathds{1} - (u - v)\right\| \leq \frac{1}{\tau}\left(\|(u^*_i - v^*_i)\mathds{1}\| + \|u - v\|\right) \\
		&\leq \frac{1}{\tau}\left(\sqrt{N}\|u^* - v^*\|_\infty + \|u - v\|\right) \overset{(*)}{\leq} \frac{1}{\tau}\left(\sqrt{N}\|u - v\|_\infty + \|u - v\|\right) \\
		&\leq \frac{\sqrt{N} + 1}{\tau}\|u - v\|,
	\end{align*}
	where $(*)$ holds due to the inequality $\|u^* - v^*\|_\infty \leq \|u - v\|_\infty$. For a proof of the latter when $u, v \in \bR_+^N$, see \cite[Lemma 2.12]{foucart2013mathematical}. The generic case $u, v \in \bR^N$ can be established by applying \cite[Lemma 2.12]{foucart2013mathematical} to $\tilde{u} = u - h$ and $\tilde{v} = v - h$ where $h := \min\{u_1, \dots, u_N, v_1, \dots, v_N\}$, observing that additive shifts do not impact the sorting operation.
\end{proof}

\section{Proof of \Cref{thm:omp_main}}\label{appendix:omp}
At the heart of our proof lies the sensitivity analysis of least-squares, which quantifies how perturbations in the decomposition space affect the least-squares solution $ w $. The following lemma provides a precise bound for this deviation in terms of $ \kappa(B) $, the $\ell^2$-norm condition number of $ B $.
\begin{lemma}[\S Theorem 18.1 \cite{trefethen2022numerical}]
	\label{th:Trefethen}
	Let $ y \in \bC^m $ and $ A \in \bC^{m \times n} $ of full rank be fixed. The least squares problem, i.e.,
	\[x = \argmin{z \in \mathbb{C}^n}{\|y - Az\|_2^2},\]
	has the following $ \ell^2 $-norm relative condition number describing the sensitivity of $ x $ with respect to $ A $:
	\[\sup_{\delta A}\left(\frac{\|\delta x\|_2}{\|x\|_2}/\frac{\|\delta A\|}{\|A\|}\right) \leq \kappa(A) + \frac{\kappa(A)^2\tan \theta}{\eta}.\]
	Here $ \delta x \in \bC^n $ and $ \delta A \in \bC^{m \times n} $ represent perturbations in $ x $ and $ A $ respectively, $ 1 \leq \kappa(A) = \|A\|\|A^\dagger\| $ is the $\ell^2$-norm condition number of $ A $, and
	\[0 \leq \theta = \cos^{-1}\frac{\|Ax\|_2}{\|y\|_2} \leq \pi/2,\quad 1 \leq \eta = \frac{\|A\|\|x\|_2}{\|Ax\|_2} \leq \kappa(A).\]
\end{lemma}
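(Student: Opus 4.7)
The plan is to derive the bound via first-order perturbation analysis of the normal equations, then translate the resulting Euclidean quantities into the geometric parameters $\theta$ and $\eta$. Since $A$ has full column rank, the unique minimizer satisfies the normal equations $A^*Ax = A^*y$, and the pseudoinverse admits the closed form $A^\dagger = (A^*A)^{-1}A^*$. Write $r = y - Ax$ for the residual and note that $r = (I - AA^\dagger)y$ is the projection of $y$ onto $\mathcal{R}(A)^\perp$, hence $A^*r = 0$.

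The first step is to differentiate $A^*Ax = A^*y$ with respect to a perturbation $A \mapsto A + \delta A$. Keeping only first-order terms in $\delta A$, I obtain
\[
[(\delta A)^* A + A^*\,\delta A]\,x + A^*A\,\delta x = (\delta A)^* y.
\]
Using $A^*y = A^*Ax$ to cancel one pair of terms and rearranging gives
\[
A^*A\,\delta x = (\delta A)^*(y - Ax) - A^*(\delta A)\,x = (\delta A)^*r - A^*(\delta A)\,x,
\]
so that
\[
\delta x = (A^*A)^{-1}(\delta A)^*r \;-\; A^\dagger(\delta A)\,x.
\]
The two terms are geometrically distinct: the first arises only because of the residual $r$ (it vanishes in the consistent case $y\in\mathcal{R}(A)$), while the second is the standard square-system condition-number contribution.

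Next I take $\ell^2$-norms, using $\|(A^*A)^{-1}\| = \|A^\dagger\|^2$ (from the SVD of $A$), to get the componentwise bound
\[
\|\delta x\| \;\le\; \|A^\dagger\|^2\,\|\delta A\|\,\|r\| \;+\; \|A^\dagger\|\,\|\delta A\|\,\|x\|.
\]
Dividing by $\|x\|$ and multiplying by $\|A\|/\|\delta A\|$ produces
\[
\frac{\|\delta x\|/\|x\|}{\|\delta A\|/\|A\|} \;\le\; \kappa(A)^2\,\frac{\|r\|}{\|A\|\,\|x\|} \;+\; \kappa(A).
\]
The final step is to recognize the trigonometric identities $\|Ax\| = \|y\|\cos\theta$ and $\|r\| = \|y\|\sin\theta$ (by Pythagoras, since $Ax \perp r$), so $\|r\|/\|Ax\| = \tan\theta$; combined with the definition $\eta = \|A\|\,\|x\|/\|Ax\|$, this gives $\|r\|/(\|A\|\,\|x\|) = \tan\theta/\eta$, which delivers the claimed bound after taking the supremum over $\delta A$.

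The main obstacle, and the place where care is required, is justifying the first-order reduction: strictly speaking, the inequality is a bound on the directional derivative of $x$ as a function of $A$, so the supremum in the statement must be interpreted in the limit $\|\delta A\| \to 0$. This requires verifying that $(A+\delta A)^\dagger$ is well defined for sufficiently small $\|\delta A\|$ (full rank is an open condition) and that the $O(\|\delta A\|^2)$ remainder is indeed negligible, which follows from the smoothness of the pseudoinverse on the open set of full-rank matrices. Once this is in place, the remaining steps are linear-algebraic and trigonometric, and the saturation of the inequality can be exhibited by choosing $\delta A$ aligned with the singular directions realizing $\|A^\dagger\|$ and with $r$, confirming the bound is tight up to constants.
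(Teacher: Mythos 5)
This lemma is not proved in the paper at all: it is imported verbatim as Theorem 18.1 of Trefethen and Bau's \emph{Numerical Linear Algebra}, so there is no in-paper argument to compare against. Your derivation is, however, a correct and essentially standard proof of that cited result: first-order perturbation of the normal equations $A^*Ax = A^*y$ yields $\delta x = (A^*A)^{-1}(\delta A)^*r - A^\dagger(\delta A)x$, the identity $\|(A^*A)^{-1}\| = \|A^\dagger\|^2$ and the orthogonality $Ax \perp r$ convert the two terms into $\kappa(A)^2\tan\theta/\eta$ and $\kappa(A)$ respectively, and you correctly flag that the supremum must be read in the infinitesimal sense $\|\delta A\|\to 0$, which is exactly how the condition number is defined in the source. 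Two minor points: the phrase ``using $A^*y = A^*Ax$ to cancel one pair of terms'' is slightly off --- that identity removes the zeroth-order terms, while the $(\delta A)^*Ax$ term is simply moved to the right-hand side to form $(\delta A)^*r$; and the closing claim about tightness is not needed (the lemma asserts only an upper bound, and in Trefethen and Bau this particular entry is stated as a bound rather than an equality). Neither affects the validity of the proof.
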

Assuming $ \argmax{}{(\Pi^{(n)})} = \argmax{}{(\tilde{\Pi}^{(n)})} $, the next lemma bounds $\|\Pi^{(n)} - \tilde{\Pi}^{(n)}\|$ and $\|\tilde{\Pi}^{(n)}\|$.

\begin{lemma}
	\label{lemma:frob_bounds}
	Let $ \Pi^{(n)} $ and $ \tilde{\Pi}^{(n)} $ be sequences of projection matrices generated by \Cref{alg:pOMP,alg:Soft-OMP} respectively and assume $ \argmax{}{v^{(i)}} = \argmax{}{\tilde{v}^{(i)}},\ i \in [n] $. Then we have
	\[\|\tilde{\Pi}^{(n)}\|_F \leq \sqrt{n},\quad
	\|\Pi^{(n)} - \tilde{\Pi}^{(n)}\|_F \leq \sqrt{2n}(N - 1)e^{-\tilde{g}^{(1:n)}/\tau},\]
	where $ \tilde{g}^{(1:n)} $ represents Soft-OMP's ``global min-gap", defined as
		\begin{equation}\label{eq:soft-omp-gap}
			\begin{aligned}
				\tilde{g}^{(1:n)} & := \min_{i \in [n]}\tilde{g}^{(i)}, \\
				\tilde{g}^{(i)}  & := \min_{j \in [N], j\neq \tilde{j}^{(i)}}\left|\tilde{v}_j^{(i)} - \tilde{v}_{\tilde{j}^{(i)}}^{(i)}\right|\ \text{with}\ \tilde{j}^{(i)} := \argmax{}{(\tilde{v}^{(i)})}.
			\end{aligned}
	\end{equation}
\end{lemma}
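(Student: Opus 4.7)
The plan is to bound the two quantities row-by-row, since $\tilde{\Pi}^{(n)}$ and $\Pi^{(n)}$ are built by stacking one new row at each iteration (the first row of $\tilde{P}^{(i)}$ and $P^{(i)}$, respectively). Because the Frobenius norm decomposes as the sum of squared row norms, it is enough to control each increment and then sum.

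For the first bound, I would invoke \hyperref[props:nonneg]{\Cref{prop:properties}\ref{props:nonneg}} and \hyperref[props:row_affine]{\Cref{prop:properties}\ref{props:row_affine}}: every row of a softsort matrix is a probability vector. For any probability vector $p \in \bR^N$ one has $\|p\|_2^2 \le \|p\|_1^2 = 1$, so each of the $n$ rows of $\tilde{\Pi}^{(n)}$ has squared $\ell^2$-norm at most $1$, and therefore $\|\tilde{\Pi}^{(n)}\|_F \le \sqrt{n}$.

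For the second bound, fix $i \in [n]$, write $\tilde{j} := \tilde{j}^{(i)} = \argmax_j \tilde{v}_j^{(i)}$, and use the assumption $\argmax v^{(i)} = \argmax \tilde{v}^{(i)}$ so that $P^{(i)}(1,:) = e_{\tilde{j}}^\top$ is the $\tilde{j}$-th canonical basis vector. The first row of $\tilde{P}^{(i)}$, by the memory-efficient formula recalled just before the theorem, has entries
\[
\tilde{P}^{(i)}(1,j) = \frac{\exp\!\bigl(-|\tilde{v}^{(i)}_{\tilde{j}}-\tilde{v}^{(i)}_{j}|/\tau\bigr)}{\sum_{l=1}^N \exp\!\bigl(-|\tilde{v}^{(i)}_{\tilde{j}}-\tilde{v}^{(i)}_{l}|/\tau\bigr)}.
\]
The $l=\tilde{j}$ term contributes $1$ to the denominator, so the denominator is at least $1$, and for $j \neq \tilde{j}$ the numerator is at most $e^{-\tilde{g}^{(i)}/\tau}$ by the definition of the local min-gap in \Cref{eq:soft-omp-gap}. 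Hence $\tilde{P}^{(i)}(1,j) \le e^{-\tilde{g}^{(i)}/\tau}$ for $j \neq \tilde{j}$, and by row affinity
\[
1 - \tilde{P}^{(i)}(1,\tilde{j}) \;=\; \sum_{j \neq \tilde{j}} \tilde{P}^{(i)}(1,j) \;\le\; (N-1)\,e^{-\tilde{g}^{(i)}/\tau}.
\]

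Combining these two estimates, and using $\sum_{j\neq \tilde{j}} \tilde{P}^{(i)}(1,j)^2 \le \bigl(\sum_{j\neq \tilde{j}} \tilde{P}^{(i)}(1,j)\bigr)^2$ for non-negative entries, I would conclude
\[
\bigl\|P^{(i)}(1,:) - \tilde{P}^{(i)}(1,:)\bigr\|_2^2 \;\le\; 2(N-1)^2 e^{-2\tilde{g}^{(i)}/\tau}.
\]
Summing over $i = 1, \dots, n$, using $\tilde{g}^{(i)} \ge \tilde{g}^{(1:n)}$, and taking the square root yields the desired bound $\|\Pi^{(n)} - \tilde{\Pi}^{(n)}\|_F \le \sqrt{2n}(N-1)\,e^{-\tilde{g}^{(1:n)}/\tau}$. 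The only delicate step is the careful accounting of the softmax ratio against the min-gap; the rest is bookkeeping, so I do not expect any conceptual obstacle.
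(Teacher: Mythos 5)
Your proposal is correct and follows essentially the same route as the paper: both bounds come from the row-wise decomposition of the Frobenius norm, with row-stochasticity giving $\|\tilde{\Pi}^{(n)}\|_F \le \sqrt{n}$, and the second bound obtained by splitting the first-row difference into the entry at $\tilde{j}^{(i)}$ plus the off-diagonal mass, bounding the sum of squares by the square of the sum, and controlling each exponential by $e^{-\tilde{g}^{(i)}/\tau}$ with denominator at least $1$. The only cosmetic difference is that the paper first applies the permutation-equivariance property to reduce to the sorted vector before making the identical estimates, whereas you work directly with the unsorted first row; the key inequalities coincide.
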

\begin{proof}
	For $ \|\tilde{\Pi}^{(n)}\|_F $, adopting the notation
	\begin{equation}\label{eq:alpha}
		\tilde{j}^{(i)} := \argmax{j \in [N]}{\tilde{v}_j^{(i)}},\quad \alpha^{(i)} := \sum_{j = 1}^{N}\exp\left(-\left|\tilde{v}^{(i)}_{\tilde{j}^{(i)}} - \tilde{v}_j^{(i)}\right|/\tau\right),
	\end{equation}
	we write
	\begin{align*}
		\|\tilde{\Pi}^{(n)}\|_F = \sqrt{\sum_{i = 1}^n \left\|\softsort{(\tilde{v}^{(i)})}(1, :)\right\|^2} &\leq \sqrt{\sum_{i = 1}^n \left\|\softsort{(\tilde{v}^{(i)})}(1, :)\right\|_1^2} = \sqrt{n},
	\end{align*}
	where we used \hyperref[props:row_affine]{\Cref{prop:properties}\ref{props:row_affine}}.

	For the second inequality, knowing that $ \argmax{}{(\Pi^{(n)})} = \argmax{}{(\tilde{\Pi}^{(n)})} $, \hyperref[props:perm_eq]{\Cref{prop:properties}\ref{props:perm_eq}} of softsort implies for all $ i \in [n] $,
	\begin{align*}
		\tilde{P}^{(i)} &= \softsort{(\tilde{v}^{(i)})} = \softsort{(\sort{(\tilde{v}^{(i)})})}P_{\argsort{(\tilde{v}^{(i)})}} \\
		&= \softsort{(\sort{(\tilde{v}^{(i)})})}P_{\argsort{(v^{(i)})}} = \softsort{(\sort{(\tilde{v}^{(i)})})}P^{(i)}.
	\end{align*}
	$ \ell^2 $-norm is permutation invariant and thanks to \hyperref[props:argmax]{\Cref{prop:properties}\ref{props:argmax}} of softsort, the location of maximum entries in each row of the matrix $ \softsort{(\sort{(\tilde{v}^{(i)})})} $ occurs on the main diagonal of the matrix, thus we write
	\allowdisplaybreaks
	\begin{align*}
		\left\|\Pi^{(n)} - \tilde{\Pi}^{(n)}\right\|_F^2 & = \sum_{i = 1}^n \left\|P^{(i)}(1,:) - \tilde{P}^{(i)}(1,:)\right\|_2^2\\
		&= \sum_{i = 1}^n \left\|P^{(i)}(1,:) - \left(\softsort{\left(\sort{(\tilde{v}^{(i)})}\right)}P^{(i)}\right)(1, :)\right\|_2^2 \\
		&= \sum_{i = 1}^n\left\|\left(I(i, :) - \softsort{\left(\sort{(\tilde{v}^{(i)})}\right)}(1, :)\right)P^{(i)}\right\|_2^2 \\
		&= \sum_{i = 1}^n\left\|I(i, :) - \softsort{\left(\sort{(\tilde{v}^{(i)})}\right)}(1, :)\right\|_2^2 \\
		&= \sum_{i = 1}^{n}\left(\left(1 - \frac{1}{\alpha^{(i)}}\right)^2 + \sum_{j = 1, j \neq i}^N\left(\exp\left({-\left|\tilde{v}_i^{(i)} - \tilde{v}_j^{(i)}\right|/\tau}\right)/\alpha^{(i)}\right)^2\right)\\
		&\leq \sum_{i = 1}^{n}\left(\left(1 - \frac{1}{\alpha^{(i)}}\right)^2 + \left(\sum_{j = 1, j \neq i}^N\exp\left(-\left|\tilde{v}_i^{(i)} - \tilde{v}_j^{(i)}\right|/\tau\right)/\alpha^{(i)}\right)^2\right) \\
		&=2\sum_{i = 1}^{n}\left(\frac{\alpha^{(i)} - 1}{\alpha^{(i)}}\right)^2 \leq 2\sum_{i = 1}^{n} \left(\alpha^{(i)} - 1\right)^2 \\
		&= 2\sum_{i = 1}^{n} \left(\sum_{j = 1, j \neq i}^N\exp\left(-\left|\tilde{v}_i^{(i)} - \tilde{v}_j^{(i)}\right|/\tau\right)\right)^2 \leq 2n(N - 1)^2e^{-2\tilde{g}^{(1:n)}/\tau},
	\end{align*}
	which concludes the proof. Here, $ \tilde{g}^{(1:n)} $ is as defined in \Cref{eq:soft-omp-gap} and $ \alpha^{(i)} \geq 1 $ is as in \Cref{eq:alpha}, where $ \tilde{j}^{(i)} = \argmax{j \in [N]}{\tilde{v}_j^{(i)}} = i $ for all $ i \in [n] $.
\end{proof}

\begin{lemma}
	\label{lemma:distance_bound}
	If $ \argmax{}{(\Pi^{(n)})} = \argmax{}{(\tilde{\Pi}^{(n)})} $ with argmax being applied row-wise, then
	\[\left\|x^{(n)} - \tilde{x}^{(n)}\right\| \leq C^{(n)}e^{-\tilde{g}^{(1:n)}/\tau},\]
	where
	\[C^{(n)} = \sqrt{2n}(N - 1)\left(\frac{\sqrt{1 - \delta_n(A)} + (\sqrt{n} + 1)\|A\|}{1 - \delta_n(A)}\right)\|y\|_2,\]
	and $ \tilde{g}^{(1:n)} $ as defined in \Cref{eq:soft-omp-gap}.
\end{lemma}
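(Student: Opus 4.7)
The plan is to decompose the difference via the algebraic identity
\[
x^{(n)} - \tilde{x}^{(n)} = (\Pi^{(n)})^\top w^{(n)} - (\tilde{\Pi}^{(n)})^\top \tilde{w}^{(n)} = (\Pi^{(n)} - \tilde{\Pi}^{(n)})^\top w^{(n)} + (\tilde{\Pi}^{(n)})^\top(w^{(n)} - \tilde{w}^{(n)}),
\]
so that by the triangle inequality and $\|\cdot\|\le\|\cdot\|_F$,
\[
\|x^{(n)} - \tilde{x}^{(n)}\| \leq \|\Pi^{(n)} - \tilde{\Pi}^{(n)}\|_F\,\|w^{(n)}\| + \|\tilde{\Pi}^{(n)}\|_F\,\|w^{(n)} - \tilde{w}^{(n)}\|.
\]
Two of the four factors are already handled by \Cref{lemma:frob_bounds}, which gives $\|\Pi^{(n)} - \tilde{\Pi}^{(n)}\|_F \leq \sqrt{2n}(N-1)e^{-\tilde g^{(1:n)}/\tau}$ and $\|\tilde{\Pi}^{(n)}\|_F \leq \sqrt n$. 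For the remaining $\|w^{(n)}\|$, I would use that the columns of $B^{(n)}$ are $n$ columns of $A$, so the RIP yields $\sqrt{1-\delta_n(A)}\,\|w^{(n)}\| \leq \|B^{(n)} w^{(n)}\|$; since $B^{(n)} w^{(n)}$ is the orthogonal projection of $y$ onto $\mathcal{R}(B^{(n)})$ we have $\|B^{(n)} w^{(n)}\| \leq \|y\|$, hence $\|w^{(n)}\| \leq \|y\|/\sqrt{1-\delta_n(A)}$. Combining these factors handles the first summand of the triangle bound and accounts precisely for the $\sqrt{1-\delta_n(A)}/(1-\delta_n(A))$ contribution inside the parentheses of $C^{(n)}$.

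For the remaining factor $\|w^{(n)} - \tilde w^{(n)}\|$, I would carry out a sensitivity analysis of the least-squares map under the matrix perturbation
\[
\delta B := \tilde B^{(n)} - B^{(n)} = A(\tilde{\Pi}^{(n)} - \Pi^{(n)})^\top,\qquad \|\delta B\| \leq \|A\|\sqrt{2n}(N-1)e^{-\tilde g^{(1:n)}/\tau},
\]
by invoking \Cref{th:Trefethen} with $B = B^{(n)}$. I would translate the abstract quantities there through $\sigma_{\min}(B^{(n)}) \geq \sqrt{1-\delta_n(A)}$, $\|B^{(n)}\| \leq \|A\|$, $\|w^{(n)}\| \leq \|y\|/\sqrt{1-\delta_n(A)}$, and the residual bound $\|y - B^{(n)} w^{(n)}\| \leq \|y\|$ (Pythagoras for the orthogonal projection onto $\mathcal R(B^{(n)})$). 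Under these substitutions, both summands of Trefethen's estimate reduce to pieces of order $\|\delta B\|\,\|y\|/(1-\delta_n(A))$. Multiplication by $\|\tilde{\Pi}^{(n)}\|_F \leq \sqrt n$ and substitution of the bound on $\|\delta B\|$ should then deliver the second contribution $(\sqrt n + 1)\|A\|/(1-\delta_n(A))$ inside $C^{(n)}$.

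The main obstacle I anticipate is the accounting inside \Cref{th:Trefethen}: its inequality is expressed in abstract relative condition-number form, and extracting the precise coefficient $(\sqrt n + 1)\|A\|$---rather than a looser one such as $2\sqrt n \|A\|$ or $\sqrt{2n}\|A\|$---will require separating the two summands $\kappa(B^{(n)})$ and $\kappa(B^{(n)})^2 \tan\theta / \eta$ and handling each with a tailored residual-based estimate, exploiting the Pythagorean identity $\|y\|^2 = \|B^{(n)} w^{(n)}\|^2 + \|y - B^{(n)} w^{(n)}\|^2$ rather than the cruder $\|B^{(n)} w^{(n)}\|,\|y - B^{(n)} w^{(n)}\| \leq \|y\|$ separately. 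Once this accounting is settled, summing the two pieces and factoring out the common prefactor $\sqrt{2n}(N-1)e^{-\tilde g^{(1:n)}/\tau}\|y\|$ yields the announced estimate with $C^{(n)}$ in the stated form.
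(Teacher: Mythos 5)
Your proposal follows essentially the same route as the paper's proof: the same splitting $\|x^{(n)}-\tilde{x}^{(n)}\|\le\|\Pi^{(n)}-\tilde{\Pi}^{(n)}\|\,\|w^{(n)}\|+\|\tilde{\Pi}^{(n)}\|\,\|w^{(n)}-\tilde{w}^{(n)}\|$, the same use of \Cref{lemma:frob_bounds}, the bound $\|{B^{(n)}}^\dagger\|\le 1/\sqrt{1-\delta_n(A)}$ via the RIP, and the invocation of \Cref{th:Trefethen} with $\delta B^{(n)}=A(\tilde{\Pi}^{(n)}-\Pi^{(n)})^\top$. On the one obstacle you flag: the paper does not use the Pythagorean refinement (which would yield $\sqrt{2n}\,\|A\|$ rather than $(\sqrt{n}+1)\|A\|$); instead it expands $\tan\theta^{(n)}/\eta^{(n)}=\|r^{(n)}\|/(\|B^{(n)}\|\|w^{(n)}\|)$ so that the $\kappa^2$ summand becomes $\kappa(B^{(n)})^2\|\delta B^{(n)}\|\,\|r^{(n)}\|/\|B^{(n)}\|^2$ with the $\|w^{(n)}\|$ cancelled, carries the factor $\|\tilde{\Pi}^{(n)}\|_F\le\sqrt{n}$ onto the $\kappa$ summand only, and bounds $\|r^{(n)}\|\le\|y\|$ and $\|w^{(n)}\|\le\|{B^{(n)}}^\dagger\|\|y\|$ separately, giving the contributions $\sqrt{n}\,\|A\|$ and $1\cdot\|A\|$ that sum to $(\sqrt{n}+1)\|A\|$ --- so your accounting would produce the same estimate up to a benign change in the constant $C^{(n)}$.
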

\begin{proof}
	We start by noting that
	\begin{align*}
		\left\|x^{(n)} - \tilde{x}^{(n)}\right\| &= \left\|\left(\Pi^{(n)}\right)^\top w^{(n)} - \left(\tilde{\Pi}^{(n)}\right)^\top\tilde{w}^{(n)}\right\| \\
		&= \left\|\left(\Pi^{(n)}\right)^\top w^{(n)} - \left(\tilde{\Pi}^{(n)}\right)^\top\left(\tilde{w}^{(n)} - w^{(n)} + w^{(n)}\right)\right\| \\
		&\leq \left\|\Pi^{(n)} - \tilde{\Pi}^{(n)}\right\|\left\|w^{(n)}\right\| + \left\|\tilde{\Pi}^{(n)}\right\|\left\|w^{(n)} - \tilde{w}^{(n)}\right\|. \numberthis \label{sOMP:ineq.1}
	\end{align*}
	The sensitivity analysis result in \Cref{th:Trefethen} implies, for the least-squares step in OMP, that
	\[\sup_{\delta B^{(n)}}\left(\frac{\left\|\delta w^{(n)}\right\|}{\left\|w^{(n)}\right\|}\frac{\left\|B^{(n)}\right\|}{\left\|\delta B^{(n)}\right\|}\right) \leq \kappa\left(B^{(n)}\right) + \frac{\kappa\left(B^{(n)}\right)^2\tan \theta^{(n)}}{\eta^{(n)}}.\]
	Taking the deviation with respect to the equivalent variables in Soft-OMP, rearranging the inequality and substituting $ \theta^{(n)} $ and $ \eta^{(n)} $ as in \Cref{th:Trefethen}, we have
	\begin{align*}
		\left\|w^{(n)}- \tilde{w}^{(n)}\right\| &\leq \left(\kappa\left(B^{(n)}\right) + \frac{\kappa\left(B^{(n)}\right)^2\tan \theta^{(n)}}{\eta^{(n)}}\right)\frac{\left\|\delta B^{(n)}\right\|}{\left\|B^{(n)}\right\|}\left\|w^{(n)}\right\| \\
		&= \left(\kappa\left(B^{(n)}\right) + \kappa\left(B^{(n)}\right)^2\frac{\left\|r^{(n)}\right\|}{\left\|B^{(n)}\right\|\left\|w^{(n)}\right\|}\right)\frac{\left\|\delta B^{(n)}\right\|}{\left\|B^{(n)}\right\|}\left\|w^{(n)}\right\| \numberthis \label{sOMP:ineq.2},
	\end{align*}
	where $\delta B$ is the perturbation introduced to the matrix $B^{(n)} = A(\Pi^{(n)})^\top$ due to approximation of $\Pi^{(n)}$ by $\tilde{\Pi}^{(n)}$. We further note that
	\[\tan(\theta^{(n)}) = \frac{\sin\left(\cos^{-1}\left(\|Ax^{(n)}\|/\|y\|\right)\right)}{\cos\left(\cos^{-1}\left(\|Ax^{(n)}\|/\|y\|\right)\right)} = \frac{\sqrt{\|y\|^2 - \|Ax^{(n)}\|^2}/\|y\|}{\|Ax^{(n)}\|/\|y\|} = \frac{\|r^{(n)}\|}{\|B^{(n)}w^{(n)}\|},\]
	where we defined $ r^{(n)} := y - Ax^{(n)} = y - B^{(n)}w^{(n)} $ and used orthogonality of $r^{(n)}$ to $Ax^{(n)}$ thanks to least squares. Substituting \Cref{sOMP:ineq.2} in \Cref{sOMP:ineq.1} and rearranging, we get
	\begin{align*}
		\left\|x^{(n)} - \tilde{x}^{(n)}\right\| &\leq \left\|\Pi^{(n)} - \tilde{\Pi}^{(n)}\right\|\left\|w^{(n)}\right\| + \left\|\tilde{\Pi}^{(n)}\right\|\left\|w^{(n)} - \tilde{w}_{n}\right\| \\
		&\leq \left(\left\|\Pi^{(n)} - \tilde{\Pi}^{(n)}\right\| + \kappa\left(B^{(n)}\right)\left\|\tilde{\Pi}^{(n)}\right\|\frac{\left\|A\left(\Pi^{(n)} - \tilde{\Pi}^{(n)}\right)^\top\right\|}{\left\|B^{(n)}\right\|}\right)\left\|w^{(n)}\right\| \\
		&\qquad\qquad + \kappa\left(B^{(n)}\right)^2\frac{\left\|A\left(\Pi^{(n)} - \tilde{\Pi}^{(n)}\right)^\top\right\|}{\left\|B^{(n)}\right\|^2}\left\|r^{(n)}\right\|.
	\end{align*}
	Taking advantage of the relations
	\begin{align*}
		&\kappa\left(B^{(n)}\right) = \left\|B^{(n)}\right\|\left\|{B^{(n)}}^\dagger\right\|,\quad w^{(n)} = {B^{(n)}}^\dagger y\\
		&\left\|\Pi^{(n)} - \tilde{\Pi}^{(n)}\right\| \leq \left\|\Pi^{(n)} - \tilde{\Pi}^{(n)}\right\|_F,\qquad \left\|\tilde{\Pi}^{(n)}\right\| \leq \left\|\tilde{\Pi}^{(n)}\right\|_F,
	\end{align*}
	and knowing that $ \|r^{(n)}\| \leq \|y\| $ (see, \cite{davis1994adaptive,foucart2013mathematical}) we have
	\begin{align*}
		\left\|x^{(n)} - \tilde{x}^{(n)}\right\| &\leq \left(\left\|\Pi^{(n)} - \tilde{\Pi}^{(n)}\right\| + \left\|{B^{(n)}}^\dagger\right\|\left\|A\right\|\left\|\tilde{\Pi}^{(n)}\right\|\left\|\Pi^{(n)} - \tilde{\Pi}^{(n)}\right\|\right)\left\|{B^{(n)}}^\dagger\right\|\left\|y\right\| \\
		&\hspace{0.35\linewidth} + \left\|{B^{(n)}}^\dagger\right\|^2\left\|A\right\|\left\|\Pi^{(n)} - \tilde{\Pi}^{(n)}\right\|\left\|y\right\|.
	\end{align*}
	We further note that
	\[\left\|{{B^{(n)}}}^\dagger\right\| = \left\|\left(A{\Pi^{(n)}}^\top\right)^\dagger\right\| = \left\|{A^\dagger_{S^{(n)}}}\right\| = \frac{1}{\sigma_{\min}(A_{S^{(n)}})} \leq \frac{1}{\sqrt{1 - \delta_{n}(A)}},\]
	where the last inequality is well-known in the compressive sensing literature (see, e.g., \cite[Ch.~6]{foucart2013mathematical}). Therefore, applying the upper bounds of \Cref{lemma:frob_bounds} and with some rearrangement we conclude that
	\begin{align*}
		\left\|x^{(n)} - \tilde{x}^{(n)}\right\| &\leq \sqrt{2n}(N - 1)\left(\frac{\sqrt{1 - \delta_n(A)} + (\sqrt{n} + 1)\|A\|}{1 - \delta_n(A)}\right)\|y\|e^{-\tilde{g}^{(1:n)}/\tau}.
	\end{align*}
\end{proof}

\begin{proof}[Proof of \Cref{thm:omp_main}]
	~
	\paragraph*{(i)} Part (i) of the theorem directly follows from the \hyperref[props:asym]{\Cref{prop:properties}\ref{props:asym}}. If $ \tau \to 0 $, Soft-OMP coincides with pOMP and thus with OMP.
	
	\paragraph*{(ii)} We prove Part (ii) via induction.
	\subparagraph*{$ \boldsymbol{(k = 0)} $} This case trivially holds as OMP and Soft-OMP are initialized as $x^{(0)} = \tilde{x}^{(0)} =0$.
	
	\subparagraph*{$ \boldsymbol{(k - 1 \rightarrow k)} $} 
	Assume that $ \argmax{}{(\Pi^{(k - 1)})} = \argmax{}{(\tilde{\Pi}^{(k - 1)})} $ and $ \max_{i \in [0:k - 1]}\\\|x^{(i)} - \tilde{x}^{(i)}\| \leq \epsilon $. We will show that $ \argmax{}{(\Pi^{(k)})} = \argmax{}{(\tilde{\Pi}^{(k)})} $ and $ \max_{i \in [0:k]}\\\|x^{(i)} - \tilde{x}^{(i)}\| \leq \epsilon $.
	
	We start by noting that for all $ j \in [N] $,
	\begin{flalign*}\label{eq:omp_cross_gap}
		\left|v_j^{(k)} - \tilde{v}_j^{(k)}\right| &= \left|\left(A^*A\left(x^{(k - 1)}\right)\right)_j - \left(A^*A\left(\tilde{x}^{(k - 1)}\right)\right)_j\right| \\
		&= \left|\left\langle a_j, A\left(x^{(k - 1)} - \tilde{x}^{(k - 1)}\right)\right\rangle\right| \leq \left\|A\left(x^{(k - 1)} - \tilde{x}^{(k - 1)}\right)\right\| \\
		&\leq \|A\|\left\|x^{(k - 1)} - \tilde{x}^{(k - 1)}\right\| \leq \|A\|\epsilon.
	\end{flalign*}
	For $ j^{(k)} = \argmax{j \in [N]}{(v_j^{(k)})} $, it can be easily verified that the above inequality, combined with the assumption $ g^{(k)} > 2\|A\|\epsilon $, implies
	\[\left|v_j^{(k)} - \tilde{v}_j^{(k)}\right| < g^{(k)}/2,\]
	for all $j \in [N]$, implying $ \argmax{}{(v^{(k)})} = \argmax{}{(\tilde{v}^{(k)})} $ and as a result $ \argmax{}{(\Pi^{(k)})} = \argmax{}{(\tilde{\Pi}^{(k)})} $. On the other hand, we notice that
		\begin{align*}
			\left|\tilde{v}_i^{(k)} - \tilde{v}_j^{(k)}\right| &= \left|\tilde{v}_i^{(k)} - v_i^{(k)} + v_i^{(k)} - \tilde{v}_j^{(k)} + v_j^{(k)} - v_j^{(k)}\right| \\
			&\geq \left|v_i^{(k)} - v_j^{(k)}\right| - \left|v_i^{(k)} - \tilde{v}_i^{(k)}\right| - \left|v_j^{(k)} - \tilde{v}_j^{(k)}\right| \geq \left|v_i^{(k)} - v_j^{(k)}\right| - 2\|A\|\epsilon, \label{eq:gap_inequality}\numberthis
		\end{align*}
		which implies $\tilde{g}^{(k)} \geq g^{(k)} - 2\|A\|\epsilon > 0$ (recall \Cref{eq:soft-omp-gap}). Now we are in a position to apply \Cref{lemma:distance_bound}, which yields
	\[\left\|x^{(k)} - \tilde{x}^{(k)}\right\| \leq C^{(k)}e^{-\tilde{g}^{(1:k)}/\tau} \leq C^{(k)}e^{\left(-g^{(1:k)} + 2\|A\|\epsilon\right)/\tau},\]
	with $ \tilde{g}^{(1:k)} $ and $g^{(1:k)}$ as defined in \Cref{eq:soft-omp-gap} and \Cref{eq:omp-gap}, respectively. Prescribing the upper bound above to be less than or equal to $ \epsilon $ and solving the corresponding inequality for $ \tau $ concludes the proof.
\end{proof}

\section{Proof of \Cref{thm:iht_main}}\label{appendix:iht}
The key component of the proof of \Cref{thm:iht_main} is a bound on the distance between IHT and Soft-IHT over two consecutive iterations, as established in the following lemma. We later use this relation in an inductive argument to derive the main result.
\begin{lemma}\label{lemma:iht_single_iteration}
	For any $ s $-sparse $ x \in \bC^N $, let $ u = u(x) := (I - A^*A)x + A^*y $, where $ A \in \bC^{m \times N} $ with $ \ell^2 $-normalized columns and $ y \in \bC^m $, and define $ v = |u| $. Also, let $ x^+ $ be the signal after one iteration of IHT, i.e., $ x^+ = H_s(u) = q\odot u $ where
	\[q = \sum_{i = 1}^{s}Q(i, :), \quad Q = P_{\argsort(v)}.\]
	Correspondingly, for any $ \tilde{x} \in \bC^N $ with $ \|x - \tilde{x}\| \leq g/2L $, where
	\[g:= \min_{i, j \in [N], i \neq j}|v_i - v_j|,\]
	and $ L := \|I - A^*A\| $ is the Lipschitz constant of the operator $ u(\cdot) $, define $ \tilde{u} = u(\tilde{x}) $ and $ \tilde{v} = |\tilde{u}| $, and let $ \tilde{x}^+ $ be the signal after one iteration of Soft-IHT, i.e., $ \tilde{x}^+ = \tilde{H}_s(\tilde{u}) = \tilde{q}\odot \tilde{u} $ with
	\[\tilde{q} = \sum_{i = 1}^s\tilde{Q}(i, :),\quad \tilde{Q} = \softsort{(\tilde{v})}.\]
	Then, we have
	\[\|x^+ - \tilde{x}^+\| \leq 2sN(s\mu\|x\|_\infty + \|y\|) e^{-\tilde{g}/\tau} + sL\|x - \tilde{x}\|,\]
	where $ \mu $ is the coherence of the matrix $ A $ and $ \tilde{g} $ is the minimum gap between elements of $ \tilde{v} $, i.e.,
	\[\tilde{g} := \min_{i, j \in [N], i \neq j}|\tilde{v}_i - \tilde{v}_j|.\]
\end{lemma}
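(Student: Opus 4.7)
The plan is to decompose the error $x^+ - \tilde x^+$ into two contributions---one capturing the mismatch between the hard mask $q$ and its soft proxy $\tilde q$, and one capturing the Lipschitz sensitivity of the iteration map $u(\cdot) = (I - A^*A)(\cdot) + A^*y$. Concretely, I would write
\[
x^+ - \tilde x^+ \;=\; q \odot u - \tilde q \odot \tilde u \;=\; (q - \tilde q) \odot u \;+\; \tilde q \odot (u - \tilde u),
\]
apply the triangle inequality, and bound each Hadamard product via the elementary estimate $\|a \odot b\| \leq \|a\| \|b\|_\infty$, choosing the split so that the mask difference meets $u$ and the soft mask meets the signal difference. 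This particular decomposition is what makes the factor $sL$ rather than $L$ appear in the signal term.

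The key preliminary is a gap-preservation step. The hypothesis $\|x - \tilde x\| \leq g/2L$, together with the Lipschitz property of $u(\cdot)$ and the reverse triangle inequality $||u_j| - |\tilde u_j|| \leq |u_j - \tilde u_j|$, yields
\[
\|v - \tilde v\|_\infty \;\leq\; \|u - \tilde u\|_\infty \;\leq\; \|u - \tilde u\| \;\leq\; L\|x - \tilde x\| \;\leq\; g/2,
\]
which, by the definition of $g$ as the minimum gap between any two entries of $v$, forces $\argsort(v) = \argsort(\tilde v)$. By the argmax property of softsort in \Cref{prop:properties}, this ensures that the argmax in each row of $\tilde Q$ coincides with that of $Q$, so the row-wise softsort analysis from \Cref{lemma:frob_bounds} applies verbatim. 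This is the main obstacle, since every subsequent estimate relies on this ordering match.

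For the mask term, the same row-by-row computation used in \Cref{lemma:frob_bounds} yields $\|Q(i,:) - \tilde Q(i,:)\|_1 \leq 2(N-1)e^{-\tilde g/\tau}$ for each $i \in [s]$; summing and using $\|\cdot\|_2 \leq \|\cdot\|_1$ gives $\|q - \tilde q\| \leq 2s(N-1)e^{-\tilde g/\tau}$. Then $\|(q - \tilde q) \odot u\| \leq \|q - \tilde q\| \|u\|_\infty$, and I would bound $\|u\|_\infty$ directly: $\|A^*y\|_\infty \leq \|y\|$ because the columns of $A$ are $\ell^2$-normalized, while the $s$-sparsity of $x$ and the coherence of $A$ yield $\|(I - A^*A)x\|_\infty \leq s\mu \|x\|_\infty$ (splitting according to whether the coordinate lies in $\supp(x)$ and using that $\|a_j\|^2 = 1$ collapses the diagonal term). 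For the signal term, observe that $\tilde Q$ is row-stochastic by \Cref{prop:properties}, so $\tilde q = \sum_{i=1}^s \tilde Q(i,:)$ satisfies $\|\tilde q\|_\infty \leq s$, and hence $\|\tilde q \odot (u - \tilde u)\| \leq \|\tilde q\|_\infty \|u - \tilde u\| \leq sL\|x - \tilde x\|$.

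Adding the two bounds and absorbing $(N-1)$ into $N$ recovers the stated inequality. Beyond the gap-preservation step, the argument is essentially bookkeeping with Hadamard-product norm inequalities, the softsort row estimate inherited from \Cref{lemma:frob_bounds}, and the standard sparsity/coherence bound for $\|(I - A^*A)x\|_\infty$.
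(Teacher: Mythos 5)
Your proposal is correct and follows essentially the same route as the paper's proof: the same splitting $x^+ - \tilde x^+ = (q-\tilde q)\odot u + \tilde q\odot(u-\tilde u)$, the same gap-preservation argument forcing $\argsort(v)=\argsort(\tilde v)$, and the same three bounds on $\|q-\tilde q\|$, $\|u\|_\infty$, and the Lipschitz term. The only (immaterial) difference is your choice of H\"older pairing---$\|a\odot b\|\le\|a\|\,\|b\|_\infty$ and $\|\tilde q\|_\infty\le s$ in place of the paper's $\ell^1$--$\ell^\infty$ pairing with $\|\tilde q\|_1=s$---which yields the same constants.
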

\begin{proof}
	The proof is broken into three parts:
	
	\paragraph{Part I} The submultiplicative inequality implies that $ u $ is a Lipschitz operator, i.e., $ \|u - \tilde{u}\| \leq L\|x - \tilde{x}\| $, where $ L := \|I - A^*A\| $.
	
	\paragraph{Part II} First note that, for each $ i \in [N] $,
	\begin{equation}\label{eq:cross_gap}
		|v_i - \tilde{v}_i| = \left||u_i| - |\tilde{u}_i|\right| \leq |u_i - \tilde{u}_i| \leq \|u - \tilde{u}\|_\infty \leq \|u - \tilde{u}\|.
	\end{equation}
	Let
	\[g:= \min_{i, j \in [N], i \neq j}|v_i - v_j|.\]
	Using Part I, we observe that under the condition $ g > 2L\|x - \tilde{x}\| $, it holds
	\begin{equation}\label{eq:gap_cross_gap}
		|v_i - \tilde{v}_i| < g/2,
	\end{equation}
	implying that $ \argsort{(v)} = \argsort{(\tilde{v})} $.
	
	\paragraph{Part III} We then arrive at bounding $ \|x^+ - \tilde{x}^+\| $. We see that
	\begin{align*}
		\|x^+ - \tilde{x}^+\| &= \|H_s(u) - \tilde{H}_s(\tilde{u})\| = \|q\odot u - \tilde{q}\odot \tilde{u}\| \\
		&= \|q\odot u - \tilde{q}\odot (\tilde{u} - u + u)\| \leq \|(q - \tilde{q})\odot u\| + \|\tilde{q}\odot (u - \tilde{u})\| \\
		&\leq \|(q - \tilde{q})\odot u\|_1 + \|\tilde{q}\odot (u - \tilde{u})\|_1 \leq \underbrace{\|q - \tilde{q}\|_1}_{(a)}\underbrace{\|u\|_\infty}_{(b)} + \underbrace{\|\tilde{q}\|_1}_{(c)}\underbrace{\|u - \tilde{u}\|_\infty}_{(d)},
	\end{align*}
	where in the last inequality we used $ \|a\odot b\|_1 = |\dotprod{a}{b}| \leq \|a\|_1\|b\|_\infty $ by H\"{o}lder's inequality. Now we bound each term separately:
	\begin{enumerate}[leftmargin=2.3em, label=(\alph*)]
		\item  Knowing that $ \argsort{}{(v)} = \argsort{}{(\tilde{v})} $ as a consequence of Part II, \hyperref[props:perm_eq]{\Cref{prop:properties}\ref{props:perm_eq}} of softsort implies
		\begin{align*}
			\softsort{(\tilde{v})} &= \softsort{(\sort{(\tilde{v})})}P_{\argsort{(\tilde{v})}} \\
			&= \softsort{(\sort{(\tilde{v})})}P_{\argsort{(v)}} = \softsort{(\sort{(\tilde{v})})}Q.
		\end{align*}
		As the $ \ell^1 $-norm is permutation invariant, a similar argument to \Cref{lemma:frob_bounds} shows
		\allowdisplaybreaks
		\begin{align*}
			\|q - \tilde{q}\|_1 &= \left\|\sum_{i = 1}^sQ(i, :) - \sum_{i = 1}^s\softsort{\left(\sort{(\tilde{v})}\right)}Q(i, :)\right\|_1 \\
			&= \left\|\sum_{i = 1}^s\left(I(i, :) - \softsort{\left(\sort{(\tilde{v})}\right)}(i, :)\right)Q\right\|_1 \\
			&= \left\|\sum_{i = 1}^s\left(I(i, :) - \softsort{\left(\sort{(\tilde{v})}\right)}(i, :)\right)\right\|_1 \\
			&= \left\|\sum_{i = 1}^s\left(e_i - \softmax{\left(-\frac{|\sort{(\tilde{v})}_i\mathbbm{1}^\top - \sort{(\tilde{v})}^\top|}{\tau}\right)}\right)\right\|_1 \\
			&\leq \sum_{i = 1}^{s}\left(1 - \frac{1}{\alpha_i}\right) + \sum_{j = 1, j \neq i}^N\sum_{i = 1}^{s}\frac{e^{-|\tilde{v}_i - \tilde{v}_j|/\tau}}{\alpha_i} \leq \sum_{i = 1}^{s}2\left(1 - \frac{1}{\alpha_i}\right) \\
			&= 2\sum_{i = 1}^{s}\frac{\alpha_i - 1}{\alpha_i} \leq 2\sum_{i = 1}^{s}(\alpha_i - 1) \leq 2\sum_{i = 1}^{s}\sum_{j = 1, j \neq i}^Ne^{-|\tilde{v}_i - \tilde{v}_j|/\tau} \leq 2sNe^{-\tilde{g}/\tau},
		\end{align*}
		where $\alpha_i := \sum_{j = 1}^{N}\exp(-\left|\tilde{v}_{i} - \tilde{v}_j\right|/\tau) $.
		\item Noting that $ \|u\|_\infty \leq \|(I - A^*A)x\|_\infty + \|A^*y\|_\infty $, for the first term we write
		\begin{align*}
			\|(I - A^*A)x\|_\infty &= \max_{i \in [N]}\left|\left((I - A^*A)x\right)_i\right| = \max_{i \in [N]}\left|\left\langle e_i - a_i^\top A, x\right\rangle\right| \\ 
			&= \max_{i \in [N]}\left|x_i - \left\langle a_i^\top A, x\right\rangle\right| = \max_{i \in [N]}\left|\left(1 - \|a_i\|^2\right)x_i - \sum_{j = 1, j \neq i}^N\left\langle a_i, a_j\right\rangle x_j\right| \\
			&\leq \max_{i \in [N]}\sum_{j \in S, j \neq i}^N\left|\left\langle a_i, a_j\right\rangle x_j\right| \leq \|x\|_\infty\max_{i \in [N]}\sum_{j \in S, j \neq i}^N\left|\left\langle a_i, a_j\right\rangle\right| \leq s\mu\|x\|_\infty,
		\end{align*}
		where $ S = \supp(x) $. Furthermore, $ \|A^*y\|_\infty = \max_{i \in [N]}|\dotprod{a_i}{y}| \leq \|y\| $. Therefore,
		\[\|u\|_\infty \leq \|(I - A^*A)x\|_\infty + \|A^*y\|_\infty \leq s\mu\|x\|_\infty + \|y\|.\]
		\item As we sum over $ s $ rows of the softsort matrix and by switching the sums over rows and columns, thanks to \hyperref[props:row_affine]{\Cref{prop:properties}\ref{props:row_affine}} of softsort we have $ \|\tilde{q}\|_1 = \sum_{i = 1}^s\|\tilde{Q}(i, :)\|_1 = s $.
		\item We derived earlier that $ \|u - \tilde{u}\|_\infty \leq \|u - \tilde{u}\| \leq L\|x - \tilde{x}\| $.
	\end{enumerate}
	~\\
	Putting all bounds of (a)--(d) together concludes the proof.
\end{proof}
Now we are in a position to set out the proof of the main IHT theorem.
\begin{proof}[Proof of \Cref{thm:iht_main}]
	~
	\paragraph*{(i)} Part (i) of the theorem directly follows from \hyperref[props:asym]{\Cref{prop:properties}\ref{props:asym}} of the softsort as in \Cref{prop:properties}. If $ \tau \to 0 $, Soft-IHT coincides with pIHT and thus with IHT.
	
	\paragraph*{(ii)} Using the result of \Cref{lemma:iht_single_iteration} by induction we can write
	\begin{align*}
		\|x^{(n)} - \tilde{x}^{(n)}\| &\leq 2sN\left(\|y\| + s\mu\|x^{(n - 1)}\|_\infty\right) e^{-\tilde{g}^{(n)}/\tau} + sL\|x^{(n - 1)} - \tilde{x}^{(n - 1)}\| \\
		&\leq 2sN\left(\|y\| + s\mu\|x^{(n - 1)}\|_\infty\right) e^{-\tilde{g}^{(n)}/\tau} \\
		&\qquad + sL\Big(2sN(\|y\| + s\mu\|x^{(n - 2)}\|_\infty) e^{-\tilde{g}^{(n - 1)}/\tau} + sL\|x^{(n - 2)} - \tilde{x}^{(n - 2)}\|\Big) \\
		&\leq 2sN(1 + sL)\left(\|y\| + s\mu\max\left\{\|x^{(n - 1)}\|_\infty, \|x^{(n - 2)}\|_\infty\right\}\right) e^{-\min\left\{\tilde{g}^{(n)}, \tilde{g}^{(n - 1)}\right\}/\tau}\\
		&\qquad + (sL)^2\|x^{(n - 2)} - \tilde{x}^{(n - 2)}\| \\
		&\quad\vdots \\
		&\leq 2sN\frac{(sL)^n - 1}{sL - 1}\left(\|y\| + s\mu \max_{1 \leq k \leq n - 1}\|x^{(k)}\|\right)e^{-\tilde{g}^{(1:n)}/\tau}+ (sL)^n\|x^{(0)} - \tilde{x}^{(0)}\|\numberthis\label{eq:iht_soft_iht_bd_1}. 
	\end{align*}
	Here $\tilde{g}^{(1:n)}$ is defined as
	\[\tilde{g}^{(1:n)} := \min_{k \in [n]}\tilde{g}^{(k)},\quad \tilde{g}^{(k)} = \min_{i, j \in [N], i \neq j}\left|v_i^{(k)} - v_j^{(k)}\right|,\]
	and the result holds under the condition $\max_{1 \leq k \leq n - 1}\|x^{(k)} - \tilde{x}^{(k)}\| \leq g^{(1:n)}/2L$ with $g^{(1:n)}$ as in \Cref{eq:iht_gap}. Under this condition, a similar analysis as in the proof of \Cref{thm:omp_main} (see \Cref{appendix:omp}) leads to $\tilde{g}^{(1:n)} \geq g^{(1:n)} - 2L\max_{1 \leq k \leq n - 1}\|x^{(k)} - \tilde{x}^{(k)}\| > 0$, connecting the global min gap of Soft-IHT to the one of IHT. Hence, the upper bound \Cref{eq:iht_soft_iht_bd_1} can be written as
	\begin{equation}
		\label{eq:iht_soft_iht_bd_2}
		\|x^{(n)} - \tilde{x}^{(n)}\| \leq C^{(n)}e^{\left(-g^{(1:n)} + 2L\max_{1 \leq k \leq n - 1}\|x^{(k)} - \tilde{x}^{(k)}\|\right)/\tau},
	\end{equation}
	where
	\[C^{(n)} = 2sN\frac{(sL)^n - 1}{sL - 1}\left(\|y\| + s\mu \max_{1 \leq k \leq n - 1}\|x^{(k)}\|\right).\]
	Now using the fact that $ x^{(0)} = \tilde{x}^{(0)} $, bounding \eqref{eq:iht_soft_iht_bd_2} from above by $ \epsilon $, which is chosen so as to satisfy $0<\epsilon < g^{(1:n)}/2L$, and solving the corresponding inequality for $ \tau $ concludes the proof.	
\end{proof}
	
\bibliographystyle{siamplain}
\bibliography{myref.bib}
\end{document}